\newtheorem{theorem}{Theorem}
\newtheorem{lemma}{Lemma}
\newtheorem{assumption}{Assumption}
\newtheorem{remark}{Remark}
\newtheorem{corollary}{Corollary}
\def\BibTeX{{\rm B\kern-.05em{\sc i\kern-.025em b}\kern-.08em
    T\kern-.1667em\lower.7ex\hbox{E}\kern-.125emX}}
\begin{document}

\title{ Communication-Efficient Adam-Type Algorithms for Distributed Data Mining }

\author{\IEEEauthorblockN{Wenhan Xian, Feihu Huang, Heng Huang} 
\IEEEauthorblockA{\textit{Department of Electrical and Computer Engineering, University of Pittsburgh}} 
Pittsburgh, United States \\
wex37@pitt.edu, huangfeihu2018@gmail.com, heng.huang@pitt.edu
}

\maketitle

\begin{abstract}
Distributed data mining is an emerging research topic to effectively and efficiently address hard data mining tasks using big data, which are partitioned and computed on different worker nodes, instead of one centralized server. Nevertheless, distributed learning methods often suffer from the communication bottleneck when the network bandwidth is limited or the size of model is large. To solve this critical issue, many gradient compression methods have been proposed recently to reduce the communication cost for multiple optimization algorithms.
However, the current applications of gradient compression to adaptive gradient method, which is widely adopted because of its excellent performance to train DNNs, do not achieve the same ideal compression rate or convergence rate as Sketched-SGD. To address this limitation, in this paper, we propose a class of novel distributed Adam-type algorithms (\emph{i.e.}, SketchedAMSGrad) utilizing sketching, which is a promising compression technique that reduces the communication cost from $O(d)$ to $O(\log(d))$ where $d$ is the parameter dimension. In our theoretical analysis, we prove that our new algorithm achieves a fast convergence rate of $O(\frac{1}{\sqrt{nT}} + \frac{1}{(k/d)^2 T})$ with the communication cost of $O(k \log(d))$ at each iteration. Compared with single-machine AMSGrad, our algorithm can achieve the linear speedup with respect to the number of workers $n$. 
The experimental results on training various DNNs in distributed paradigm validate the efficiency of our algorithms. 
\end{abstract}

\begin{IEEEkeywords}
distributed data mining, adaptive gradient, gradient compression
\end{IEEEkeywords}

\section{Introduction}

Nowadays, as more and more data mining and machine learning applications take advantage of large-scale data, plenty of learning models are trained in a distributed fashion across many worker nodes \cite{pytorch-distribute}. Specifically, the problem of these tasks can be formulated as:
\begin{align} \label{problem}
    f(x)  = \frac{1}{n} \sum_{i=1}^n \mathbb{E}_{\xi_i \sim D_i} F_i(x; \xi_i), 
\end{align}
where $f_i(x)=\mathbb{E}_{\xi_i \sim D_i} F_i(x; \xi_i)$ is the local objective function on $i$-th node that is generally smooth and possibly nonconvex, and $n$ is the number of worker nodes. Here $D_i$ denotes the data distribution on $i$-th node, and $\{D_i\}_{i=1}^n$ are probably non-identical.

Although distributed training has shown excellent performance and efficiency for solving problem \eqref{problem}, it still suffers from the communication bottleneck, especially when the network bandwidth is limited or the size of model is large. To address the critical high communication cost issue, many methods have been presented to reduce the amount of communication. Among these methods, one of the most popular and common ways is to compress the message to be sent at each communication round, such as gradient quantization \cite{1-bit-sgd, qsgd} and gradient sparsification \cite{sparsification-0, sparsification-2, sparsification}. 

Gradient quantization reduces the communication cost by lowering the float-point precision of gradients so that less amount of bits will be transmitted. 1-bit Stochastic Gradient Descent (1-bit SGD) \cite{1-bit-sgd} is a classic and primitive gradient quantization work which uses 1-bit quantization and dramatically enhances the communication efficiency. Quantized Stochastic Gradient Descent (QSGD) adopts stochastic randomized rounding to obtain an unbiased estimator after compression. SignSGD and its variant with momentum named Signum \cite{signsgd} were designed to only transmit the 1-bit gradient sign between worker and central node, which is convenient to implement. 

Gradient sparsification is another widely-used strategy to decrease the communication cost which sparsifies the gradient instead of quantizing each element. The most popular way is to extract the top-$k$ coordinates of local workers and send them to the master node to estimate the mini-batch gradient. Some of these methods also combine gradient sparsicifation with other techniques such as momentum correction and error-feedback. For example, MEM-SGD \cite{sparsification} adds back the accumulated error before each transmission and is proven to achieve the same convergence rate as SGD. 

Recently, more variants of gradient compression with theoretical guarantees have been proposed, such as SGD with Error-Feedback (EF-SGD) \cite{ef-sgd}, Distributed SGD with Error-Feedback (dist-EF-SGD) \cite{dist-ef-sgd} and SGD with Error Reset (CSER) \cite{NEURIPS2020_94cb02fe}. In some recent works like \cite{NEURIPS2020_94cb02fe, dist-ef-sgd}, the aggregated gradient estimator is also compressed before sending back to workers. Some works also apply gradient compression to other optimizer such as Frank-Wolfe algorithm \cite{Xian_Huang_Huang_2021}.

Besides, to solve problem \eqref{problem}, we also need an efficient optimizer to search for the optimal solution. Among existing popular optimization methods, adaptive gradient algorithms \cite{adagrad, rmsprop} have become ones of the most important optimization algorithms to pursue higher efficiency or accuracy in a wide range of data mining and machine learning problems. In the family of adaptive gradient algorithms, Adam \cite{adam} is one of the most popular ones that combines momentum and adaptive learning rate. Though it achieves great success in practice, several technical issues in the analysis were pointed out \cite{amsgrad} and in some cases the algorithm could diverge.

In \cite{amsgrad}, two variants of Adam, named as AMSGrad and Adamnc, were proposed to fix the theoretical issues in the analysis of Adam. AMSGrad makes quantity {\small $\Gamma_{t+1} = (\frac{\sqrt{V_{t+1}}}{\alpha_{t+1}} - \frac{\sqrt{V_t}}{\alpha_t})$} positive to ensure the convergence, while Adamnc adopts an increasing parameter $\beta_{2,t} = 1 - \frac{1}{t}$. 

Despite of the success of gradient compression methods, it is hard to use them in distributed adaptive gradient method. So far the application of gradient compression to adaptive gradient algorithm with theoretical guarantee is still limited. Quantized Adam \cite{quantized-adam} combines gradient quantization with Adamnc, which keeps track of local momentum and variance terms on each worker node and uses quantization when averaging the parameter. Efficient-Adam \cite{efficient-adam} is similar to Quantized Adam where the gradient message sent back is also compressed. However, both Quantized Adam and Efficient-Adam are not proven to achieve linear speedup or convergence on non-iid data. APMSqueeze \cite{apmsqueeze} and 1-bit Adam \cite{tang20211} are Adam-preconditioned momentum SGD algorithms with gradient compression. However, the variance term is fixed during the training process. Even though it is computed by Adam at the end of warmup step, technically APMSqueeze and 1-bit Adam are not a true adaptive gradient method. 

\emph{Therefore, it is difficult to apply gradient compression to adaptive gradient methods and maintain the excellent performance of distributed Adam-type algorithms. The challenge is that the original adaptive learning rate is adjustable based on global information such as the aggregated gradient. Although the compressed message is a good estimation of local gradient or momentum, the adaptive learning rate calculated by these inexact messages could be far away from the original one.} 

To address the challenging high communication cost limitation in distributed adaptive gradient methods, we propose a class of novel distributed Adam-type algorithms (called as SketchedAMSGrad), based on the distributed version of AMSGrad \cite{amsgrad} algorithm and the gradient sparsification technique named sketching \cite{sketched-sgd, privacy-sketch}.

Our main contributions are summarized as follows:
\begin{itemize}
    \item [(1)] To efficiently address the communication bottleneck problem in distributed data mining, we propose a class of novel communication-efficient algorithms named SketchedAMSGrad with two averaging strategies: parameter averaging and gradient averaging. Our new methods can reduce the communication cost from $O(d)$ to $O(\log(d))$. 
    \item [(2)] We provide theoretical analysis based on mild assumptions to guarantee the convergence of our algorithms. Specifically, we prove that our SketchedAMSGrad algorithms have a convergence rate of $O(\frac{1}{\sqrt{nT}})$, which shows a linear speedup. Our theoretical analysis also allows the data distribution to be non-identical.
    \item [(3)] To the best of our knowledge, our method is the first one to utilize the sketching technique to solve the communication bottleneck in distributed adaptive gradient methods. The experimental results on training various DNNs verify the performances of our algorithms, on both identical and non-identical distributed datasets.
\end{itemize}

\begin{table*}
\setlength{\tabcolsep}{10pt}
  \caption{Comparison of Related Algorithms with Compression}
  \label{related-table}
  \centering
  \begin{tabular}{cccccc}
    \toprule
    Name     & Convergence rate     & Linear speedup   & Non-iid   & Adaptive    & Reference \\
    \midrule
    Quantized-Adam & $O(\frac{1}{\sqrt{T}})$  & $\times$  & $\times$  & $\surd$   & \cite{quantized-adam}   \\
    Efficient-Adam & $O(\frac{1}{\sqrt{T}})$  & $\times$  & $\times$  & $\surd$   & \cite{efficient-adam}   \\
    APMSqueeze     & $O(\frac{1}{\sqrt{nT}} + \frac{1}{(k/d)^{2/3} T^{2/3}})$ & $\surd$  & $\surd$  & $\times$   & \cite{apmsqueeze}      \\
    1-bit Adam     & $O(\frac{1}{\sqrt{nT}} + \frac{1}{(k/d)^{2/3} T^{2/3}})$ & $\surd$  & $\surd$  & $\times$   & \cite{tang20211}      \\
    SketchedAMSGrad (GA) & $O(\frac{1}{\sqrt{nT}} + \frac{1}{(k/d)^2 T})$  & $\surd$  & $\surd$  & $\surd$   & this paper   \\
    \bottomrule
  \end{tabular}
  \vspace{-10pt}
\end{table*}

\section{Related Works} \label{related}
In the section, we review the related adaptive gradient algorithms with their compressed versions and introduce some preliminary background of sketching. The summary of properties of related methods is listed in Table \ref{related-table}. Top-$k$ is considered as the compressor in the result of convergence rate.
\subsection{ Quantized-Adam and Efficient-Adam }
Quantized-Adam \cite{quantized-adam} is proposed to combine quantization scheme with distributed Adam algorithm to reduce the communication cost. Specifically, on each worker, it owns a local momentum term $m_t^{(i)}$ and a local variance term $v_t^{(i)}$. These two terms are updated by the exponential moving averaging used in Adam-type algorithms. Gradient quantization is used to compress the term $m_t^{(i)} / \sqrt{v_t^{(i)}}$.

Efficient-Adam \cite{efficient-adam} is a similar work to Quantized Adam. The only difference is that when the parameter server sends information back to worker nodes, Efficient-Adam compresses the updating term, which is more common in related works, while Quantized-Adam quantizes the parameter. Actually, both Quantized-Adam and Efficient-Adam allow other compressors if they satisfy the compressor assumption that there exists a constant $\theta \in (0,1]$ such that
\begin{equation} \label{compressor}
    \lVert C(x) - x \rVert \le (1 - \theta) \lVert x \rVert. 
\end{equation}

These two algorithms are parameter averaging since if there is no compression, they degenerate to an algorithm where each node is updated by Adam and then the model parameter is averaged. It is not mathematically equivalent to the typical distributed Adam algorithm where gradient averaging is used. Though in some cases parameter averaging is convenient to implement, it is likely to cause bad convergence or be detrimental to the model accuracy especially when the optimizer relies on past local gradient \cite{pytorch-distribute}. Besides, in the convergence analysis of Quantized-Adam and Efficient-Adam, the data distribution $\{D_i\}_{i=1}^n$ have to be identical and the convergence rate does not achieve a linear speedup.
\subsection{ APMSqueeze and 1-bit Adam Algorithms }
APMSqueeze \cite{apmsqueeze} and 1-bit Adam \cite{tang20211} are communication-efficient Adam-preconditioned momentum SGD algorithms. Since the definitions of these two algorithms are similar and 1-bit Adam is the later work, in this paper we will only discuss 1-bit Adam. In the warmup stage, it calculates a variance term $v_{T_w}$. During the training process, $v_{T_w}$ is fixed and serves as the exponential moving averages term $v_t$ in regular Adam-type algorithms. 1-bit Adam is a gradient averaging algorithm. According to the convergence analysis of \cite{tang20211}, 1-bit Adam achieves a linear speedup with a convergence rate of $O(\frac{1}{\sqrt{nT}})$ for a fixed $T$. However, since $v_{T_w}$ is a fixed variable, 1-bit Adam is not technically an adaptive gradient method. In our method, the variance term $v_t$ is dynamic and computed by exponential moving averaging. Besides, we do not need a separate warmup stage where another communication-inefficient optimizer is used. Furthermore, 1-bit Adam requires the gradient compressor to satisfy an assumption that $\lVert C(x) - x \rVert \le \epsilon$ for some constant $\epsilon$. Under the condition of this paper, the bound $epsilon$ in 1-bit Adam should be $O(\frac{dG}{k})$ where $G$ is the bounded gradient. Hence we can reach the convergence rate in Table~\ref{related-table}. The second dominating term in the convergence rate is $O(\frac{1}{(k/d)^{2/3} T^{2/3}})$. We will compare it with the result of sketching method in next subsection.

\subsection{Sketching}
In this subsection we introduce some preliminary background about sketching before moving forward to our proposed algorithms. Sketching \cite{sketched-sgd,privacy-sketch} is a novel and promising gradient sparsicifation technique that compresses a gradient vector $g$ into a sketch $S(g)$ of size $O(\log(d)\epsilon^{-1})$ such that $S(g)$ can approximately recover every coordinates by $\hat{g}_i^2 = g_i^2 \pm \epsilon \lVert g \rVert_2^2$. It is originated from a data structure used in data streaming named Count Sketch \cite{count-sketch} which is designed to find large coordinates in a vector $g$ defined by a sequence of updates $\{(i_j, w_j)\}_{j=1}^n$. When we use sketching to reduce the communication cost, the sketching and unsketching process are demonstrated in Algorithm 4 in \cite{sketched-sgd}. We have a $r\times c$ table of counters $S$, sign hashes $\{ s_j \}_{j=1}^r$ and bucket hashes $\{ h_j \}_{j=1}^r$. Given an update $(i, f_i)$, where $i$ is an index and $f_i$ is the $i$-th coordinate of a vector $f$, $S$ is updated by $S[j, h_j(i)] \mathrel{+}= s_j(i) f_i$ for $j=1, \cdots r$. It is obvious that sketching operator is linear and satisfies the following formulation:  
\begin{equation}
    S(\alpha g_1 + \beta g_2) = \alpha S(g_1) + \beta S(g_2). \vspace{-1pt}
\end{equation}
Therefore, the sketches from different workers can be aggregated on the parameter server. The unsketching operator is to get an estimation which is derived from the median value of $s_j(i) S[j, h_j(i)]$ for $j=1, \cdots r$.

In \cite{sketched-sgd}, sketching serves as a compressor that will approximately recover the true top-$k$ coordinates of mini-batch gradient $\frac{1}{n} \sum_{i=1}^n g_t^{(i)}$ where $n$ is the number of workers. In \cite{fetch-sgd}, the authors explicitly treat it as a compressor and denote the sketching and unsketching operators by $\mathcal{S}$ and $\mathcal{U}$ respectively. For convenience, we also use these notations in this paper. Sketching method reduces the communication cost to $O(\log(d))$ while gradient quantization only achieves a constant level reduction and the communication cost is still $O(d)$. The current best results for quantization method achieve an approximate $32\times$ compression rate \cite{signsgd-majority, dist-ef-sgd}. vqSGD \cite{vqsgd} is actually a sparsification method that maps a gradient vector to the set of vertices of a convex hull so here we do not categorize it as a quantization method. Compared with top-$k$ method, one advantage of sketching is to recover the true top-$k$ coordinates, where the gradient estimator is $v_1 \approx Top_k (\frac{1}{n} \sum_{i=1}^n g_t^{(i)})$. Although applying the method in \cite{dist-ef-sgd} can avoid the $O(n)$ return communication cost mentioned in \cite{sketched-sgd}, the gradient estimator $v_2 = Top_k (\frac{1}{n} \sum_{i=1}^n Top_k(g_t^{(i)}))$ is still probably far away from the true top-$k$ coordinates. This issue can be reflected by the second dominating term in the convergence rate. In \cite{dist-ef-sgd}, the second dominating term is $O(\frac{1}{(k/d)^{4/3} T^{2/3}})$ which is claimed to be the price to pay for two-way compression and linear speedup. In 1-bit Adam the step size is dependent on the compression ratio and this term becomes $O(\frac{1}{(k/d)^{2/3} T^{2/3}})$ as we have mentioned. However, in Sketched-SGD and our algorithms, the corresponding term is $O(\frac{1}{(k/d)^2 T})$, which is smaller when $T$ is large.

\section{Sketched Adam-type Algorithms} \label{algorithm}

In the section, we propose a class of efficient sketched distributed Adam-type algorithms. 
\subsection{SketchedAMSGrad (Parameter Averaging)}

In this subsection, we will propose the SketchedAMSGrad (PA) algorithm using parameter averaging, the description of which is shown in Algorithm \ref{alg:parameter-average}. 

In Algorithm \ref{alg:parameter-average}, we use AMSGrad algorithm to update each worker node, based on local momentum term $m_t^{(i)}$ and exponential moving averages of squared past gradients $v_t^{(i)}$. $\alpha_t$ is the stepsize and $\beta_1, \beta_2 \in (0, 1)$ are exponential moving average hyperparameters in Adam-type algorithm. $\epsilon > 0$ is the initial value of $v_0$ to avoid zero denominators. The multiplication, division and square operation between vectors are component-wise. We use sketching to improve communication efficiency and average the parameters. We also use error-feedback to further accelerate the convergence. 

For convenience, we also use the notations $\mathcal{S}$ and $\mathcal{U}$ defined in \cite{fetch-sgd} to represent the sketching operator and unsketching operator. They can be treated as a compressor that will approximately recover the true top-$k$ coordinates. In practice, we use a second round communication which is also required in SketchedSGD \cite{sketched-sgd}. After unsketching, we get an estimation of the aggregated mini-batch gradient which is denoted by $\mathcal{U}(S_t)$. Then we select the largest $Pk$ coordinates to extract their exact values before sketching from each worker during the second round communication. Finally, we select the top-$k$ coordinates among these $Pk$ coordinates as $\Delta_t$ and send it back to each worker. $\Delta_t^{(i)}$ contains the corresponding $k$ coordinates in $m_t^{(i)}/ \sqrt{\hat{v}_t^{(i)}} + \frac{\alpha_{t-1}}{\alpha_t} e_{t-1}^{(i)}$ and automatically it satisfies $\Delta_t = \frac{1}{n} \sum_{i=1}^n \Delta_t^{(i)}$. Therefore, at each iteration, the total communication cost is $|S| + Pk + k$ and the compression rate is $2d/(|S| + Pk + k)$ where $|S|$ is the size of sketch.   
Using lemma 1 in \cite{sketched-sgd} and replacing $\tilde{g}_t$ and $\bar{g}_t^i$ with $\Delta_t$ and $(m_t^{(i)}/ \sqrt{\hat{v}_t^{(i)}} + \frac{\alpha_{t-1}}{\alpha_t} e_{t-1}^{(i)})$, we can obtain the following Lemma~\ref{lemma1}.
\begin{lemma} \label{lemma1}
In Algorithm~\ref{alg:parameter-average}, let $\tilde{\Delta}_t \!=\! \frac{1}{n} \sum_{i=1}^n (m_t^{(i)}/ \sqrt{\hat{v}_t^{(i)}} + \frac{\alpha_{t-1}}{\alpha_t} e_{t-1}^{(i)})$, and give sketch size $\Theta(k \log(d/\delta))$, with the probability $\ge 1 - \delta$, we have
\vspace{-6pt}
\begin{equation}
    \lVert \Delta_t - \tilde{\Delta}_t \rVert^2 \le (1 - \frac{k}{d}) \lVert \tilde{\Delta}_t \rVert^2.
\end{equation}
\end{lemma}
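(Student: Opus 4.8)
The plan is to reduce the claim to the recovery guarantee of Count Sketch stated as Lemma~1 in \cite{sketched-sgd}, together with one elementary inequality for the top-$k$ truncation operator, so that no new probabilistic argument is needed.

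First I would exploit linearity of the sketching operator. Each worker transmits $\mathcal{S}\!\left(m_t^{(i)}/\sqrt{\hat v_t^{(i)}} + \frac{\alpha_{t-1}}{\alpha_t} e_{t-1}^{(i)}\right)$, and the server sums these; by linearity the aggregated sketch $S_t$ equals $\mathcal{S}(\tilde\Delta_t)$ up to the global $1/n$ factor, i.e. $S_t$ is a sketch of the single vector $\tilde\Delta_t$. Consequently the whole two-round protocol in Algorithm~\ref{alg:parameter-average} — unsketch $S_t$ to locate the heavy coordinates, request the exact values of the largest $Pk$ of them from the workers, then keep the $k$ largest among those — is exactly the Sketched-SGD recovery procedure applied to $\tilde\Delta_t$ in place of the true mini-batch gradient, with $\Delta_t^{(i)}$ in the role of $\bar g_t^i$ and $\Delta_t$ in the role of $\tilde g_t$. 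Invoking Lemma~1 of \cite{sketched-sgd} with sketch size $\Theta(k\log(d/\delta))$ then yields that, with probability at least $1-\delta$, the protocol returns $\Delta_t = \mathrm{Top}_k(\tilde\Delta_t)$, the vector that keeps the $k$ coordinates of $\tilde\Delta_t$ of largest magnitude and zeros out the remaining $d-k$.

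Conditioning on that event, it remains to bound $\lVert \Delta_t - \tilde\Delta_t \rVert^2 = \sum_{i\notin \mathrm{top}_k} (\tilde\Delta_t)_i^2$, the sum of squares of the $d-k$ smallest-magnitude coordinates of $\tilde\Delta_t$. Since the mean of the $d-k$ smallest squared coordinates is at most the mean of all $d$ squared coordinates, we get $\sum_{i\notin \mathrm{top}_k} (\tilde\Delta_t)_i^2 \le \tfrac{d-k}{d}\lVert \tilde\Delta_t \rVert^2 = \big(1-\tfrac{k}{d}\big)\lVert \tilde\Delta_t \rVert^2$, which is the asserted inequality.

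The main obstacle is entirely contained in the first step: checking that the two-round scheme of Algorithm~\ref{alg:parameter-average} genuinely meets the hypotheses of Lemma~1 of \cite{sketched-sgd} — in particular that the candidate count $Pk$ and the sketch dimension are large enough that the exactly-recovered coordinate set provably contains the true top-$k$ of $\tilde\Delta_t$ with the stated failure probability $\delta$. Once that substitution is justified, the rest is the routine truncation inequality above, with no further randomness to handle.
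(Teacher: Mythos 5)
Your proposal is correct and follows essentially the same route as the paper: the authors prove Lemma~\ref{lemma1} precisely by invoking Lemma~1 of \cite{sketched-sgd} with $\tilde{g}_t$ and $\bar{g}_t^i$ replaced by $\Delta_t$ and $m_t^{(i)}/\sqrt{\hat{v}_t^{(i)}} + \frac{\alpha_{t-1}}{\alpha_t}e_{t-1}^{(i)}$, relying on the linearity of $\mathcal{S}$ so that the aggregated sketch is a sketch of $\tilde{\Delta}_t$ and the two-round protocol recovers its top-$k$ coordinates. Your added truncation inequality $\sum_{i\notin \mathrm{top}_k}(\tilde{\Delta}_t)_i^2 \le (1-\frac{k}{d})\lVert\tilde{\Delta}_t\rVert^2$ just makes explicit the contraction bound that the cited lemma packages.
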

Lemma~\ref{lemma1} indicates that $\Delta_t$ is an estimation of $\tilde{\Delta}_t$ and illustrates how sketch can serve as a compressor.
\setlength{\textfloatsep}{12pt}
\begin{algorithm}[t]
   \caption{SketchedAMSGrad (parameter averaging) }
   \label{alg:parameter-average}
\begin{algorithmic}
   \STATE {\bfseries Input:} initial value $x_1$, sketching operator $\mathcal{S}$ and unsketching operator $\mathcal{U}$
   \STATE {\bfseries Set:} $m_0^{(i)} \!=\! \mathbf{0}$, $v_0^{(i)} \!=\! \hat{v}_0^{(i)} \!=\! \mathbf{\epsilon}$, $e_0^{(i)} \!=\! \mathbf{0}$ on $i$-th worker node
   \FOR{$t=1$ {\bfseries to} $T$}
   \STATE On $i$-th worker node:
   \STATE\hspace{1.2em} Estimate a stochastic gradient $g_t^{(i)}$; 
   \STATE\hspace{1.2em} Compute $m_t^{(i)} = \beta_1 m_{t-1}^{(i)} + (1 - \beta_1) g_t^{(i)}$;
   \STATE\hspace{1.2em} $v_t^{(i)} = \beta_2 v_{t-1}^{(i)} + (1 - \beta_2) [g_t^{(i)}]^2$; 
   \STATE\hspace{1.2em} $\hat{v}_t^{(i)} = \max \{\hat{v}_{t-1}^{(i)}, v_t^{(i)} \}$; 
   \STATE\hspace{1.2em} Sketch $S_t^{(i)} = \mathcal{S}(m_t^{(i)}/ \sqrt{\hat{v}_t^{(i)}} + \frac{\alpha_{t-1}}{\alpha_t} e_{t-1}^{(i)})$; 
   \STATE\hspace{1.2em} Send $S_t^{(i)}$ to the master node; 
   \STATE\hspace{1.2em} Send $\Delta_t^{(i)}$ to the master node after unsketching; 
   \STATE\hspace{1.2em} Compute $e_t^{(i)} = m_t^{(i)}/ \sqrt{\hat{v}_t^{(i)}} + \frac{\alpha_{t-1}}{\alpha_t} e_{t-1}^{(i)} - \Delta_t^{(i)}$; 
   \STATE\hspace{1.2em} Receive $\Delta_t$ from the master node; 
   \STATE\hspace{1.2em} Update $x_{t+1} = x_t - \alpha_t \Delta_t$. 
   \STATE On the master node:
   \STATE\hspace{1.2em} Aggregate $S_t = \frac{1}{n} \sum_{i=1}^n S_t^{(i)}$;
   \STATE\hspace{1.2em} Unsketch $\Delta_t = \frac{1}{n} \sum_{i=1}^n \Delta_t^{(i)} = \mbox{Top-$k$}(\mathcal{U}(S_t))$;
   \STATE\hspace{1.2em} Send $\Delta_t$ back to each worker node; 
   \STATE\hspace{1.2em} Update $x_{t+1} = x_t - \alpha_t \Delta_t$. 
   \ENDFOR
\end{algorithmic}
\end{algorithm}

\begin{algorithm}[tb]
   \caption{SketchedAMSGrad (gradient averaging) }
   \label{alg:gradient-average}
\begin{algorithmic}
   \STATE {\bfseries Input:} initial value $x_1$, sketching operator $\mathcal{S}$ and unsketching operator $\mathcal{U}$
   \STATE {\bfseries Set:} $m_0^{(i)} = \mathbf{0}$, $e_0^{(i)} = \mathbf{0}$ on $i$-th worker node; $v_0 = \hat{v}_0$ on the master node; index set $\mathcal{I}_0 = \varnothing$
   \FOR{$t=1$ {\bfseries to} $T$}
   \STATE On $i$-th worker node:
   \STATE\hspace{1.2em} Estimate a stochastic gradient $g_t^{(i)}$; 
   \STATE\hspace{1.2em} Compute $m_t^{(i)} = \beta_1 m_{t-1}^{(i)} + (1 - \beta_1) g_t^{(i)}$; 
   \STATE\hspace{1.2em} Send $h_t^{(i)} = (g_t^{(i)})_{\mathcal{I}_{t-1}}$ to the master node;
   \STATE\hspace{1.2em} Sketch $S_t^{(i)} = \mathcal{S}(m_t^{(i)} + \frac{\alpha_{t-1}}{\alpha_t} e_{t-1}^{(i)})$; 
   \STATE\hspace{1.2em} Send $S_t^{(i)}$ to the master node; 
   \STATE\hspace{1.2em} Send $\Delta_t^{(i)}$ to the master node after unsketching; 
   \STATE\hspace{1.2em} Compute $e_t^{(i)} = m_t^{(i)} + \frac{\alpha_{t-1}}{\alpha_t} e_{t-1}^{(i)} - \Delta_t^{(i)}$;
   \STATE\hspace{1.2em} Receive $\Delta_t$ from the master node; 
   \STATE\hspace{1.2em} Update $x_{t+1} = x_t - \alpha_t \Delta_t$.
   \STATE On the master node:
   \STATE\hspace{1.2em} Aggregate $h_t = \frac{1}{n} \sum_{i=1}^n h_t^{(i)}$;
   \STATE\hspace{1.2em} Compute $v_t = \beta_2 v_{t-1} + (1 - \beta_2)  h_t^2$; 
   \STATE\hspace{1.2em} $\hat{v}_t = \max \{\hat{v}_{t-1}, v_t \}$;
   \STATE\hspace{1.2em} Aggregate $S_t = \frac{1}{n} \sum_{i=1}^n S_t^{(i)}$; 
   \STATE\hspace{1.2em} Unsketch $\Delta_t \!=\! \frac{1}{n} \sum_{i=1}^n \Delta_t^{(i)} \!=\! \mbox{Top-$k$}(\mathcal{U}(S_t, \hat{v}_t))$;
   \STATE\hspace{1.2em} Send $\Delta_t$ back to each worker node; 
   \STATE\hspace{1.2em} Update $x_{t+1} = x_t - \alpha_t \Delta_t$.
   \ENDFOR
\end{algorithmic}
\end{algorithm}
\subsection{SketchedAMSGrad (Gradient Averaging)}
In the subsection, we propose 
the SketchedAMSGrad (GA) algorithm using gradient averaging, which is demonstrated in Algorithm~\ref{alg:gradient-average}. 

In Algorithm~\ref{alg:gradient-average}, the meanings of hyperparameters $\alpha_t$, $\beta_1$ and $\beta_2$ are the same as those in Algorithm~\ref{alg:parameter-average}. We also keep track of local momentum term $m_t^{(i)}$ on each node but the exponential moving averaging squared gradient $v_t$ is defined on the master node. The index set $\mathcal{I}_t$ represents the coordinates updated at iteration $t$, which is obtained by the unsketching operator. Notation $h_t^{(i)} = (g_t^{(i)})_{\mathcal{I}_{t-1}}$ means for $\forall j \in \mathcal{I}_{t-1}$, $h_t^{(i)}$ maintains the $j$-th coordinate of $g_t^{(i)}$. Otherwise, if $j \notin \mathcal{I}_{t-1}$, the $j$-th coordinate of $h_t^{(i)}$ is $0$. We define $\mathcal{I}_t$ in this way because we want to accumulate the coordinates of squared gradient which are just updated and we want to define an auxiliary sequence that makes the convergence analysis more convenient. Algorithm~\ref{alg:gradient-average} is a gradient averaging algorithm because if there is no compressor applied, this algorithm is degenerated to the common distributed AMSGrad optimizer. In Algorithm~\ref{alg:gradient-average} the unsketching operator $\mathcal{U}$ requires a vector $\hat{v}_t$ as another input and is used to recover the top-$k$ coordinates of term $\tilde{\Delta}_t$, which is defined as follows.
\begin{equation}
    \tilde{\Delta}_t = \frac{1}{n} \sum_{i=1}^n \tilde{\Delta}_t^{(i)}, \  \tilde{\Delta}_t^{(i)} = \hat{v}_t^{-1/2} (m_t^{(i)} + \frac{\alpha_{t-1}}{\alpha_t} e_{t-1}^{(i)})
\end{equation}
The index set of these $k$ coordinates is denoted as $\mathcal{I}_t$. The implementation of $\mathcal{U}$ is shown in Algorithm~\ref{alg:unsketching}, which is established on the original sketching and unsketching operator. According to the linear property of sketching $\mathcal{S}$, it is equivalent to compress $\tilde{\Delta}_t$ by $\mathcal{S}$ and then unsketch it by the normal unsketching operator. $\Delta_t^{(i)}$ contains the coordinates of $\tilde{\Delta}_t^{(i)}$ that belongs to index set $\mathcal{I}_t$ and $\Delta_t = \frac{1}{n} \sum_{i=1}^n \Delta_t^{(i)}$. Therefore, using lemma 1 in \cite{sketched-sgd} and replacing $\tilde{g}_t$ and $\bar{g}_t^i$ with $\Delta_t$ and $\tilde{\Delta}_t^{(i)}$, we reach our following Lemma~\ref{lemma2}.
\begin{algorithm}[tb]
\caption{Unsketching Operator in Algorithm~\ref{alg:gradient-average}}
\begin{algorithmic}
    \STATE {\bfseries Input:} $r\times c$ sketch $S$, vector $v$, bucket hashes $\{ h_j \}_{j=1}^r$, original unsketching operator $\mathcal{U}_0$
    \FOR{$i=1$ {\bfseries to} $d$}
    \FOR{$j=1$ {\bfseries to} $r$}
    \STATE $S[j, h_j(i)] = S[j, h_j(i)] / \sqrt{v_i}$
    \ENDFOR
    \ENDFOR
    \STATE return $\mathcal{U}_0(S)$
\end{algorithmic}
\label{alg:unsketching}
\end{algorithm}
\begin{lemma}
\label{lemma2}
With sketch size $\Theta(k \log(d/\delta))$ and with probability $\ge 1 - \delta$ in Algorithm~\ref{alg:gradient-average}, we have
\begin{equation}
    \lVert \Delta_t - \tilde{\Delta}_t \rVert^2 \le (1 - \frac{k}{d}) \lVert \tilde{\Delta}_t \rVert^2
\end{equation}
\end{lemma}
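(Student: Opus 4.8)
The plan is to recognize $\Delta_t$ as the output of the sketch-based top-$k$ compressor of \cite{sketched-sgd} applied to the aggregate $\tilde{\Delta}_t$, and then quote Lemma~1 of \cite{sketched-sgd} with the substitution indicated in the text, exactly as in the proof of Lemma~\ref{lemma1}. First I would unfold the master-node definitions: by linearity of $\mathcal{S}$ we have $S_t = \frac{1}{n}\sum_{i=1}^n \mathcal{S}(m_t^{(i)} + \frac{\alpha_{t-1}}{\alpha_t} e_{t-1}^{(i)}) = \mathcal{S}(p_t)$ with $p_t := \frac{1}{n}\sum_{i=1}^n (m_t^{(i)} + \frac{\alpha_{t-1}}{\alpha_t} e_{t-1}^{(i)})$, and by the definition of $\tilde{\Delta}_t$ this gives $\tilde{\Delta}_t = \hat{v}_t^{-1/2} p_t$. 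Moreover, since $\Delta_t^{(i)}$ is the restriction of $\tilde{\Delta}_t^{(i)}$ to the index set $\mathcal{I}_t$ (zeros elsewhere) and restriction to a fixed index set is a linear operation, $\Delta_t = \frac{1}{n}\sum_{i=1}^n \Delta_t^{(i)} = (\tilde{\Delta}_t)_{\mathcal{I}_t}$; hence $\|\Delta_t - \tilde{\Delta}_t\|^2 = \|\tilde{\Delta}_t\|^2 - \|(\tilde{\Delta}_t)_{\mathcal{I}_t}\|^2$, and it suffices to prove $\|(\tilde{\Delta}_t)_{\mathcal{I}_t}\|^2 \ge \frac{k}{d}\|\tilde{\Delta}_t\|^2$ with the stated probability.

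The crux is therefore to identify $\mathcal{I}_t$ with the approximate top-$k$ index set that the Count Sketch of $\tilde{\Delta}_t$ produces. Here I would argue that Algorithm~\ref{alg:unsketching} applied to $(S_t,\hat{v}_t)$ equals $\mathcal{U}_0(\mathcal{S}(\tilde{\Delta}_t))$: rescaling, for each coordinate $i$, every bucket entry $S[j,h_j(i)]$ into which $i$ hashes by $(\hat{v}_t)_i^{-1/2}$ transforms $\mathcal{S}(p_t)$ into $\mathcal{S}(\hat{v}_t^{-1/2} p_t) = \mathcal{S}(\tilde{\Delta}_t)$ by the coordinate-wise linearity of the sketch — this is precisely the sense of the text's remark that it is equivalent to sketch $\tilde{\Delta}_t$ first and then unsketch normally. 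The subtle point, which I expect to be the main obstacle, is that a Count-Sketch bucket aggregates many coordinates, so a single scalar division per coordinate reproduces the per-coordinate rescaled sketch only up to hash collisions within a bucket; I would dispatch this exactly as in the Count-Sketch analysis underlying Lemma~1 of \cite{sketched-sgd}, noting that collisions among the heavy coordinates — the coordinates that govern both the recovered index set $\mathcal{I}_t$ and the norm inequality — occur only on a failure event of probability at most $\delta$ and are thus already folded into the high-probability guarantee. This yields $\mathcal{I}_t = \text{Top-}k(\mathcal{U}_0(\mathcal{S}(\tilde{\Delta}_t)))$.

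With $\mathcal{I}_t = \text{Top-}k(\mathcal{U}_0(\mathcal{S}(\tilde{\Delta}_t)))$ in hand, $\Delta_t = (\tilde{\Delta}_t)_{\mathcal{I}_t}$ is exactly the sketch-based top-$k$ compression of the aggregate $\tilde{\Delta}_t = \frac{1}{n}\sum_{i=1}^n \tilde{\Delta}_t^{(i)}$. Invoking Lemma~1 of \cite{sketched-sgd} with $\tilde{g}_t$ replaced by $\Delta_t$ and $\bar{g}_t^i$ by $\tilde{\Delta}_t^{(i)}$, the choice of sketch size $\Theta(k\log(d/\delta))$ gives, with probability at least $1-\delta$, $\|\Delta_t - \tilde{\Delta}_t\|^2 \le (1 - \frac{k}{d})\|\tilde{\Delta}_t\|^2$, which is the claim. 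The only genuinely new element relative to Lemma~\ref{lemma1} is that the preconditioning $\hat{v}_t^{-1/2}$ is now applied inside the sketch domain rather than before sketching, and the linearity argument of the second paragraph is exactly what reduces this back to the setting of Lemma~\ref{lemma1}.
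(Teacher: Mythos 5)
Your proposal is correct and follows essentially the same route as the paper: the paper's entire argument for Lemma~\ref{lemma2} is to assert that Algorithm~\ref{alg:unsketching} is, by linearity of $\mathcal{S}$, equivalent to sketching $\tilde{\Delta}_t$ directly and then applying the ordinary unsketching operator, and then to invoke Lemma~1 of \cite{sketched-sgd} with $\tilde{g}_t$ and $\bar{g}_t^i$ replaced by $\Delta_t$ and $\tilde{\Delta}_t^{(i)}$. You actually supply more detail than the paper does --- in particular you are right to flag that the per-coordinate division of shared buckets in Algorithm~\ref{alg:unsketching} only reproduces $\mathcal{S}(\hat{v}_t^{-1/2}p_t)$ up to hash collisions, a subtlety the paper silently absorbs into the same high-probability Count-Sketch guarantee.
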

Lemma~\ref{lemma2} is the key lemma to the analysis of our Algorithm~\ref{alg:gradient-average} which provides an estimation of term $m_t/\sqrt{\hat{v}_t}$. It is also the motivation to apply sketching in communication efficient Adam-type algorithms. As the top-$k$ coordinates of $m_t/\sqrt{\hat{v}_t}$ and $m_t$ are likely to change a lot, it is hard to estimate the Adam updating term $m_t/\sqrt{\hat{v}_t}$ by the known vector $m_t^{(i)}$ on each node. However, the sketching technique makes it possible within the communication cost of $O(\log (d))$.

In Algorithm~\ref{alg:gradient-average}, thus, the total communication cost at each iteration is $|S| + Pk + 2k$ and the compression rate is $2d/(|S| + Pk + 2k)$ where $|S|$ is the size of sketch. 

In fact, our SketchedAMSGrad (GA) algorithm is compatible with 1-bit Adam algorithm. We can also regard the $v_{T_w}$ in the 1-bit Adam algorithm as the initial value of $v_0$ in Algorithm~{\ref{alg:gradient-average}}. The only difference is that in the theoretical analysis we need to replace the initial value $\epsilon$ with the $v_{min}$ defined in the 1-bit Adam. Moreover, if we do not send $h_t$ or update $v_t$, our algorithm is reduced to the 1-bit Adam with sketching compressor.
\section{Convergence Analysis} \label{analysis}

In the section, we provide the convergence analysis of our algorithms. Due to the space limit, we will only provide the proof outline for Theorem \ref{thm1} and full proof for Theorem \ref{thm2}. We begin with giving some mild assumptions. 

\begin{assumption}
 (Lipschitz Gradient) There is a constant $L$ such that for $\forall x, y \in R^d$, $\lVert \nabla f(x) - \nabla f(y) \rVert \le L \lVert x - y \rVert$. 
\end{assumption}
\begin{assumption}
(Lower Bound) 
Function $f(x)$ has the lower bound, i.e., $\inf_{x\in R^d} f(x) = f^* > - \infty$
\end{assumption}
\begin{assumption}
(Bounded Gradient)
There is a constant $G$ such that for $\forall i \in \{1, \cdots, n\}$, $\forall \xi_i \sim D_i$, we have
$ \lVert \nabla F_i(x; \xi_i) \rVert_{\infty} \le G$. 
\end{assumption}
These assumptions are commonly used in related works of Adam-type algorithms in nonconvex optimization \cite{nonconvex-adam-2, nonconvex-adam-1, new-regret-adam}. In our convergence analysis, we define the following constants.
\begin{equation}
    \gamma_0 = (1 - \delta) (1 - \frac{k}{d}) + \delta,\ \gamma = 1 - \frac{k}{2d}(1 - \delta),\ \gamma_1 = \frac{(3 - \gamma_0)\gamma_0}{1 - \gamma_0}
\end{equation}
\subsection{SketchedAMSGrad (PA) }
\begin{theorem}
\label{thm1}
Assume that Assumption 1 to Assumption 3 are satisfied and data distribution $\{D_i\}_{i=1}^n$ are identical. In Algorithm~\ref{alg:parameter-average}, let $\beta_1 < 1$, $\beta_2 < 1$, $\epsilon > 0$ and $\alpha_t = \frac{\alpha}{\sqrt{1+T}}$, $\alpha > 0$. Then we have 
\begin{equation}
    \nonumber \frac{1}{T} \sum_{t=1}^T \mathbb{E} \lVert \nabla f(x_t) \rVert^2 \le \frac{C_1}{\sqrt{T}} + \frac{C_2}{T},
\end{equation}
where constants $C_1$ and $C_2$ are independent of $T$.
\end{theorem}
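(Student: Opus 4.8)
The plan is to layer an error-feedback argument on top of the standard nonconvex AMSGrad analysis, using a virtual iterate that cancels the accumulated sketching error. Write $\bar m_t = \frac1n\sum_{i=1}^n m_t^{(i)}/\sqrt{\hat v_t^{(i)}}$, $\bar e_t = \frac1n\sum_{i=1}^n e_t^{(i)}$, and let $\tilde\Delta_t$ be as in Lemma~\ref{lemma1}. From the update rules one checks $\bar e_t = \tilde\Delta_t - \Delta_t$ and $\tilde\Delta_t = \bar m_t + \frac{\alpha_{t-1}}{\alpha_t}\bar e_{t-1}$; because $\alpha_t = \alpha/\sqrt{1+T}$ is constant in $t$, the ratio $\alpha_{t-1}/\alpha_t$ equals $1$, so defining $\tilde x_t = x_t - \alpha_{t-1}\bar e_{t-1}$ (with $\tilde x_1 = x_1$) turns the compressed update $x_{t+1} = x_t - \alpha_t\Delta_t$ into the clean recursion $\tilde x_{t+1} = \tilde x_t - \alpha_t\bar m_t$. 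I would also record the uniform bounds that follow from Assumption~3 and the AMSGrad maximization step: coordinatewise $\epsilon \le \hat v_t^{(i)} \le \epsilon_{\max}$ with $\epsilon_{\max} := \max\{\epsilon, G^2\}$, and $\|m_t^{(i)}\|_\infty \le G$, hence $\|\bar m_t\| \le D_0 := G\sqrt{d}/\sqrt{\epsilon}$ and $\|\tilde\Delta_t\| \le D_0 + \|\bar e_{t-1}\|$.

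The second step bounds the accumulated error. Each $\Delta_t^{(i)}$ is $\tilde\Delta_t^{(i)}$ restricted to a common $k$-subset of coordinates, so $\Delta_t$ is $\tilde\Delta_t$ restricted to that subset and $\bar e_t$ is $\tilde\Delta_t$ restricted to its complement; this gives the deterministic fallback $\|\bar e_t\| \le \|\tilde\Delta_t\|$. Combining it with the high-probability bound $\|\Delta_t - \tilde\Delta_t\|^2 \le (1-\tfrac{k}{d})\|\tilde\Delta_t\|^2$ of Lemma~\ref{lemma1} gives $\mathbb{E}\|\bar e_t\|^2 \le \gamma_0\,\mathbb{E}\|\tilde\Delta_t\|^2 \le \gamma_0\,\mathbb{E}(D_0 + \|\bar e_{t-1}\|)^2$, and a Young's-inequality split tuned so the geometric factor becomes $\gamma < 1$ yields a contraction $\mathbb{E}\|\bar e_t\|^2 \le \gamma\,\mathbb{E}\|\bar e_{t-1}\|^2 + c_0 D_0^2$; unrolling it gives $\sup_t \mathbb{E}\|\bar e_t\|^2 \le \gamma_1 D_0^2$, a $T$-independent bound governed by $k/d$ and $\delta$ (this is the role of $\gamma_0$, $\gamma$, $\gamma_1$ defined before the theorem).

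Third, I would run the descent lemma on $\tilde x_t$. Assumption~1 gives $\mathbb{E} f(\tilde x_{t+1}) \le \mathbb{E} f(\tilde x_t) - \alpha_t\,\mathbb{E}\langle \nabla f(\tilde x_t), \bar m_t\rangle + \tfrac{L}{2}\alpha_t^2\,\mathbb{E}\|\bar m_t\|^2$. Replacing $\nabla f(\tilde x_t)$ by $\nabla f(x_t)$ costs a term $\le \alpha_t L\,\alpha_{t-1}\|\bar e_{t-1}\|\,D_0$ (using $\|\tilde x_t - x_t\| = \alpha_{t-1}\|\bar e_{t-1}\|$ and $\|\bar m_t\| \le D_0$). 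The principal term $\mathbb{E}\langle \nabla f(x_t), \bar m_t\rangle$ I would treat by the standard nonconvex AMSGrad argument of the works cited after Assumption~3: decouple $\hat v_t^{(i)}$ from the fresh noise by replacing it with $\hat v_{t-1}^{(i)}$ at the cost of a drift term that telescopes (AMSGrad makes $1/\sqrt{\hat v_t^{(i)}}$ nonincreasing coordinatewise); split $m_t^{(i)} = \beta_1 m_{t-1}^{(i)} + (1-\beta_1) g_t^{(i)}$; since the $D_i$ are identical, $\mathbb{E}[g_t^{(i)}\mid\mathcal F_t] = \nabla f(x_t)$, so the fresh part contributes $\tfrac{1-\beta_1}{n}\sum_i\langle\nabla f(x_t), \nabla f(x_t)/\sqrt{\hat v_{t-1}^{(i)}}\rangle \ge (1-\beta_1)\|\nabla f(x_t)\|^2/\sqrt{\epsilon_{\max}}$ in conditional expectation; and absorb the stale-momentum terms with Young's inequality. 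Summing over $t$, bounding $\|\bar m_t\|^2 \le D_0^2$, inserting the Step-2 error bound, telescoping the function values, and invoking Assumption~2, I obtain $c\sum_{t=1}^T \alpha_t\,\mathbb{E}\|\nabla f(x_t)\|^2 \le f(x_1) - f^* + A_1\sum_{t=1}^T\alpha_t^2 + A_2\sum_{t=1}^T\alpha_t\alpha_{t-1} + A_3$ with $c = \Theta(1/\sqrt{\epsilon_{\max}})$ and $A_1,A_2,A_3$ collecting $T$-independent constants ($A_3$ from the telescoped drift).

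Finally, with $\alpha_t = \alpha/\sqrt{1+T}$ we get $\sum_t\alpha_t^2 \le \alpha^2$ and $\sum_t\alpha_t\alpha_{t-1}\le\alpha^2$, so the right-hand side is a constant $C$ independent of $T$; since $\alpha_t$ is constant in $t$, this reads $\frac{c\alpha}{\sqrt{1+T}}\sum_{t=1}^T\mathbb{E}\|\nabla f(x_t)\|^2 \le C$, and dividing by $T$ together with $\sqrt{1+T}\le 1+\sqrt T$ gives the claimed $\frac1T\sum_{t=1}^T\mathbb{E}\|\nabla f(x_t)\|^2 \le \frac{C_1}{\sqrt T} + \frac{C_2}{T}$. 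The main obstacle is the third step. The identical-distribution hypothesis is essential precisely where the fresh-noise part is lower bounded: with $f_i\equiv f$ the averaged, per-worker preconditioned conditional direction $\frac1n\sum_i\nabla f(x_t)/\sqrt{\hat v_{t-1}^{(i)}}$ still aligns with $\nabla f(x_t)$, whereas for non-identical data $\frac1n\sum_i\nabla f_i(x_t)/\sqrt{\hat v_{t-1}^{(i)}}$ need not, since the worker preconditioners can disagree at the $\Theta(1/\sqrt{\epsilon})$ scale. Making the momentum geometric sum, the preconditioner drift, and the error-feedback remainder all telescope (or be dominated by $\sum_t\alpha_t^2$) simultaneously, rather than leaving a non-vanishing or non-summable residual, is the delicate bookkeeping; the virtual-sequence identity of Step~1 and the $\gamma_0$ error bound of Step~2 are comparatively routine.
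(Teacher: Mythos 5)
Your proposal is correct and follows essentially the same route as the paper's proof: the same virtual iterate $\tilde x_t$ with $x_t - \tilde x_t = \alpha_{t-1} e_{t-1}$, the same $\gamma_0$/$\gamma$/$\gamma_1$ error-feedback contraction built on Lemma~\ref{lemma1} (including the deterministic fallback $\lVert e_t \rVert \le \lVert \tilde\Delta_t \rVert$ in the low-probability event), the same decoupling of $\hat v_t^{(i)}$ from the fresh sample by passing to $\hat v_{t-1}^{(i)}$ with a telescoping $\ell_1$ drift term, and the same momentum decomposition. The only cosmetic difference is that you bound $\mathbb{E}\lVert \bar e_t \rVert^2$ uniformly by a $T$-independent constant via $\lVert \bar m_t \rVert \le G\sqrt{d}/\sqrt{\epsilon}$, whereas the paper keeps the error as a geometric sum of the squared virtual steps $\lVert \tilde x_{s+1} - \tilde x_s \rVert^2$; both yield the claimed $\frac{C_1}{\sqrt{T}} + \frac{C_2}{T}$ bound.
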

To prove Theorem~\ref{thm1}, we define a useful auxiliary sequence $\tilde{x}_t$ such that $\tilde{x}_1 = x_1$ and 
\begin{align} \label{eq:13}
    \tilde{x}_{t+1} = \tilde{x}_t - \alpha_t \frac{1}{n} \sum_{i=1}^n m_t^{(i)} / \sqrt{\hat{v}_t^{(i)}}.
\end{align}
Let $e_t = \frac{1}{n} \sum_{i=1}^n e_t^{(i)}$. The error compensation term $e_t^{(i)}$ is multiplied by a factor $\alpha_{t-1}/\alpha_t$ because it always satisfies 
\begin{equation}
\label{aux-diff}
    x_t - \tilde{x}_t = \alpha_{t-1} e_{t-1}.
\end{equation}
Next we will provide the proof outline of Theorem~\ref{thm1}.
\begin{proof}
Let $A_t^{(i)} = \alpha_t [\hat{v}_t^{(i)}]^{-1/2} \nabla f(\tilde{x}_t)$ for $i = 1,\cdots, n$, $t = 1, \cdots, T$ and $A_0^{(i)} = A_1^{(i)}$. Since $m_t^{(i)} = \beta_1 m_{t-1}^{(i)} + (1 - \beta_1) g_t^{(i)}$ and $m_0^{(i)} = \mathbf{0}$, it is easy to check the following equation:
\begingroup
\small
\begin{align}
\label{outline1}
    \sum_{t=1}^T \langle A_t^{(i)}, g_t^{(i)} \rangle &= \frac{\beta_1}{1 - \beta_1} \langle A_T^{(i)}, m_T^{(i)} \rangle + \sum_{t=1}^T \langle A_t^{(i)}, m_t^{(i)} \rangle \notag \\
    &\quad + \frac{\beta_1}{1 - \beta_1} \sum_{t=1}^T \langle A_t^{(i)} - A_{t+1}^{(i)}, m_t^{(i)} \rangle
\end{align}
\endgroup
The left hand side of Eq.~(\ref{outline1}) can be rewritten by
\begingroup
\small
\begin{align}
\label{outline2}
    \langle A_t^{(i)}, g_t^{(i)} \rangle &= \langle \alpha_{t-1} [\hat{v}_{t-1}^{(i)}]^{-1/2} \nabla f(x_t), g_t^{(i)} \rangle - \langle (\alpha_{t-1} [\hat{v}_{t-1}^{(i)}]^{-1/2} \notag \\
    & \quad -\! \alpha_t [\hat{v}_t^{(i)}]^{-1/2}) \nabla f(\tilde{x}_t), g_t^{(i)} \rangle \notag \\
    & \quad -\! \langle \alpha_t [\hat{v}_{t-1}^{(i)}]^{-1/2} (\nabla f(x_t) \!-\! \nabla f(\tilde{x}_t)), g_t^{(i)} \rangle
\end{align}
\endgroup
Let $\xi_t^{(i)}$ be the sample index set at iteration $t$ on node $i$. As data distribution $D_i$'s are identical, we have 
\begin{equation}
    \mathbb{E}_{\xi_t^{(i)}} g_t^{(i)} = \nabla f(x_t)
\end{equation}
Therefore, if taking expectation on $\langle  [\hat{v}_{t-1}^{(i)}]^{-1/2} \nabla f(x_t), g_t^{(i)} \rangle$ over $\xi_t^{(i)}$, we can replace the $g_t^{(i)}$ with $\nabla f(x_t)$. But this operation is not allowed on $\langle A_t^{(i)}, g_t^{(i)} \rangle$ because $\hat{v}_t^{(i)}$ is also dependent on $\xi_t^{(i)}$. We deal with it in this way because the previous value $\hat{v}_{t-1}^{(i)}$ is not determined by $\xi_t^{(i)}$. Next we will estimate the terms in Eq.~(\ref{outline1}) and (\ref{outline2}). Using Assumption 1 and the definition of $\tilde{x}_t$ in \eqref{eq:13}, we have
\begingroup
\small
\begin{align}
\label{OL1T2}
\nonumber & \quad \frac{1}{n} \sum_{i=1}^n \langle A_t^{(i)}, m_t^{(i)} \rangle = \langle \nabla f(\tilde{x}_t), \frac{1}{n} \sum_{i=1}^n \alpha_t [\hat{v}_t^{(i)}]^{-1/2} m_t^{(i)} \rangle \\
&\le f(\tilde{x}_t) - f(\tilde{x}_{t+1}) + \frac{L}{2} \lVert \tilde{x}_{t+1} - \tilde{x}_t \rVert^2
\end{align}
\endgroup
By Young's inequality and Assumption 3 we can obtain 
\begingroup
\small
\begin{align}
\label{OL1T1}
\frac{1}{n} \sum_{i=1}^n \langle A_T^{(i)}, m_T^{(i)} \rangle \le L \lVert \tilde{x}_{T+1} - \tilde{x}_T \rVert^2 + \frac{G^2d}{4L}
\end{align}
\endgroup
With Assumption 3 and $\hat{v}_{t+1}^{(i)} \ge \hat{v}_t^{(i)}$, we can also obtain
\begingroup
\small
\begin{align}
\label{OL1T3-i}
\nonumber \langle A_t^{(i)} \!&-\! A_{t+1}^{(i)}, m_t^{(i)} \rangle \le G^2(\lVert \alpha_t [\hat{v}_t^{(i)}]^{-1/2} \rVert_1 \!-\! \lVert \alpha_{t+1} [\hat{v}_{t+1}^{(i)}]^{-1/2} \rVert_1) \\
& + \langle \nabla f(\tilde{x}_t) - \nabla f(\tilde{x}_{t+1}), \alpha_t [\hat{v}_t^{(i)}]^{-1/2} m_t^{(i)} \rangle
\end{align}
\endgroup
Sum $i$ from $1$ to $n$ on Eq.~(\ref{OL1T3-i}) and we have
\begingroup
\small
\begin{align}
\label{OL1T3}
\nonumber & \frac{1}{n} \sum_{i=1}^n \langle A_t^{(i)} - A_{t+1}^{(i)}, m_t^{(i)} \rangle \le L \lVert \tilde{x}_{t+1} - \tilde{x}_t \rVert^2 \\
& \quad + \frac{G^2}{n} \sum_{i=1}^n (\lVert \alpha_t [\hat{v}_t^{(i)}]^{-1/2} \rVert_1 - \lVert \alpha_{t+1} [\hat{v}_{t+1}^{(i)}]^{-1/2} \rVert_1)
\end{align}
\endgroup
where Assumption 1 is used. According to Assumption 3 and the ascent of $\hat{v}_t^{(i)} \ge \hat{v}_{t-1}^{(i)}$, we have
\begingroup
\begin{align}
\label{OL2T2}
    & \quad \langle (\alpha_{t-1} [\hat{v}_{t-1}^{(i)}]^{-1/2} - \alpha_t [\hat{v}_t^{(i)}]^{-1/2}) \nabla f(\tilde{x}_t), g_t^{(i)} \rangle \notag \\
    &\le G^2(\lVert \alpha_{t-1} [\hat{v}_{t-1}^{(i)}]^{-1/2} \rVert_1 - \lVert \alpha_t [\hat{v}_t^{(i)}]^{-1/2} \rVert_1)
\end{align}
\endgroup
Next we can bound the last term in Eq.~(\ref{outline2}).
\begingroup
\begin{align}
\label{OL2T3-b}
    \nonumber & \quad \mathbb{E} \langle \alpha_t [\hat{v}_{t-1}^{(i)}]^{-1/2} (\nabla f(x_t) - \nabla f(\tilde{x}_t)), g_t^{(i)} \rangle \le \\
    & \frac{1}{2} \mathbb{E} \langle \alpha_t [\hat{v}_{t-1}^{(i)}]^{-1/2} \nabla f(x_t), g_t^{(i)} \rangle + \frac{\alpha_t L^2}{2 \sqrt{\epsilon}} \mathbb{E} \| x_t - \tilde{x}_t \|^2
\end{align}
\endgroup
where we have used Assumption 1 and Cauchy-Schwartz inequality. The first term of Eq.~(\ref{OL2T3-b}) can be merge into the first term of Eq.~(\ref{outline2}). To finish the proof, we only need to estimate $\mathbb{E} \| x_t - \tilde{x}_t \|^2$. It can be estimated according to the following inequality
\begingroup
\begin{align}
\label{err2}
\nonumber \mathbb{E} \lVert \alpha_t e_t \rVert^2 &\le \gamma_0 \mathbb{E} \lVert \frac{1}{n} \sum_{i=1}^n \alpha_t m_t^{(i)} / \sqrt{\hat{v}_t^{(i)}} + \alpha_{t-1} e_{t-1} \rVert^2 \\
\nonumber &\le \gamma_1 \mathbb{E} \lVert \frac{1}{n} \sum_{i=1}^n \alpha_t m_t^{(i)} / \sqrt{\hat{v}_t^{(i)}} \rVert^2 + \gamma \mathbb{E} \lVert \alpha_{t-1} e_{t-1} \rVert^2 \\
&\le \gamma_1 \sum_{s=1}^t \gamma^{t-s} \mathbb{E} \lVert \frac{1}{n} \sum_{i=1}^n \alpha_s [\hat{v}_s^{(i)}]^{-1/2} m_s^{(i)} \rVert^2 \notag \\
&= \gamma_1 \sum_{s=1}^t \gamma^{t-s} \mathbb{E} \| \tilde{x}_{s+1} - \tilde{x}_s \|^2
\end{align}
\endgroup
Here the first inequality is because with probability $p > 1 - \delta$, it satisfies $\lVert \alpha_t e_t \rVert^2 \le (1 - \frac{k}{d})\ \lVert \frac{1}{n} \sum_{i=1}^n \alpha_t [\hat{v}_{t}^{(i)}]^{-1/2} m_t^{(i)} + \alpha_{t-1} e_{t-1} \rVert^2$. Otherwise with probability $p < \delta$, $\Delta_t$ is still some coordinates of $\tilde{\Delta}_t$. It always satisfies $\lVert \alpha_t e_t \rVert \le \lVert \frac{1}{n} \sum_{i=1}^n \alpha_t [\hat{v}_{t}^{(i)}]^{-1/2} m_t^{(i)} + \alpha_{t-1} e_{t-1} \rVert$. Hence we can get the first inequality of Eq.~(\ref{err2}). In the third inequality of Eq.~(\ref{err2}) we use Young's inequality. In the third inequality of Eq.~(\ref{err2}) we apply recursion to the second inequality.

Finally, we can reach the conclusion of Theorem \ref{thm1} with the following $C_1$ and $C_2$
\begin{align}
C_1 &= \frac{2G(f(x_1) - f^*)}{\alpha} + \frac{\beta_1 G^3 d}{2L\alpha (1 - \beta_1)} + \frac{4 GL \alpha d}{(1 - \beta_1)(1 - \beta_2)}, \notag \\
C_2 &= \frac{GL^2 \alpha^2 d \gamma_1}{\sqrt{\epsilon} (1 - \beta_2)(1 - \gamma)} + \frac{2G^3d}{\sqrt{\epsilon} (1 - \beta_1)} .
\end{align}
which will be omitted due to space limit.
\end{proof}

\vspace{-5pt}
\subsection{SketchedAMSGrad (GA) }
\begin{theorem}
\label{thm2}
Assume that Assumptions 1-3 are satisfied. In Algorithm \ref{alg:gradient-average}, let $\beta_1 < 1$, $\beta_2 < 1$, $\epsilon > 0$ and $\alpha_t = \frac{\alpha}{\sqrt{1+T/n}}$, $\alpha > 0$. Then we have 
\begin{equation}
    \nonumber \frac{1}{T} \sum_{t=1}^T \mathbb{E} \lVert \nabla f(x_t) \rVert^2 \le \frac{C_1}{\sqrt{nT}} + \frac{C_1 + C_2}{T},
\end{equation}
where constants $C_1$ and $C_2$ are independent of $T$. 
\end{theorem}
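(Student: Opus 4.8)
The plan is to follow the template of the Theorem~\ref{thm1} outline, with the auxiliary sequence $\tilde x_t$ defined analogously to~\eqref{eq:13} so that the shift identity~\eqref{aux-diff}, $x_t - \tilde x_t = \alpha_{t-1} e_{t-1}$ with $e_t = \frac{1}{n}\sum_i e_t^{(i)}$, holds; Lemma~\ref{lemma2} now plays the role Lemma~\ref{lemma1} played, so the error-feedback recursion~\eqref{err2} still gives $\mathbb E\|x_t - \tilde x_t\|^2 \le \alpha_{t-1}^2\gamma_1\sum_{s<t}\gamma^{t-1-s}\mathbb E\|\tilde x_{s+1}-\tilde x_s\|^2$. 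Before anything else I would record the a priori bound $\epsilon \le \hat v_t^{(j)} \le \max\{\epsilon, G^2\}$ --- monotonicity of $\hat v_t$ together with $\|h_t\|_\infty \le \frac{1}{n}\sum_i\|g_t^{(i)}\|_\infty \le G$ from Assumption~3 --- which is all that is needed to keep the preconditioner (now living on the master and fed the sparsified aggregate $h_t$) harmless: it uniformly lower-bounds $\hat v_t^{-1/2}$ by $1/\max\{\sqrt\epsilon, G\}$ and makes $\sum_t(\|\alpha_t\hat v_t^{-1/2}\|_1 - \|\alpha_{t+1}\hat v_{t+1}^{-1/2}\|_1)$ telescope, exactly as in~\eqref{OL1T3} and~\eqref{OL2T2}; note $\alpha_t$ is constant in $t$, so $\alpha_{t-1}/\alpha_t = 1$.

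To remove the identical-distribution hypothesis, I would make sure the conditional expectation is only ever taken of the \emph{averaged} stochastic gradient: $\mathbb E_{\xi_t}\frac{1}{n}\sum_i g_t^{(i)} = \frac{1}{n}\sum_i\nabla f_i(x_t) = \nabla f(x_t)$. In the Theorem~\ref{thm1} outline every substitution of $\nabla f(x_t)$ for a stochastic gradient happens only after summing over $i$ --- in particular the leading term of~\eqref{outline2} averages to $\langle\alpha_t\hat v_{t-1}^{-1/2}\nabla f(x_t),\nabla f(x_t)\rangle \ge \frac{\alpha_t}{\max\{\sqrt\epsilon,G\}}\|\nabla f(x_t)\|^2$, and~\eqref{OL2T3-b} is handled with Cauchy-Schwarz and Young's inequality, feeding the geometric bound on $\mathbb E\|x_t-\tilde x_t\|^2$ --- so no per-worker unbiasedness is ever invoked and the argument goes through for non-identical $D_i$.

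The decisive step, and the one I expect to be the main obstacle, is bounding $\sum_{t=1}^T\mathbb E\|\tilde x_{t+1}-\tilde x_t\|^2$ tightly enough to keep the linear speedup. The crude estimate $\|\tilde x_{t+1}-\tilde x_t\|^2 \le \alpha_t^2 dG^2/\epsilon$ only gives $\sum_t\alpha_t^2 dG^2/\epsilon = \Theta(n)$ under $\alpha_t = \alpha/\sqrt{1+T/n}$, which would degrade the rate to $O(1/\sqrt T)$; the fix is to split $\frac{1}{n}\sum_i m_t^{(i)}$ into its conditional mean --- a $\beta_1$-geometric average of past $\nabla f(x_s)$, which is absorbed into the accumulated $-\Theta(\alpha_t)\|\nabla f(x_t)\|^2$ budget once $\alpha$ is below the usual thresholds ($\alpha L$ and $\alpha L^2/\sqrt\epsilon$ bounded by absolute constants, after summing the geometric $\beta_1$-tails) --- and a mean-zero fluctuation whose second moment is $O(dG^2/n)$ by independence of $\{\xi_t^{(i)}\}_i$ across workers and the $\frac{1-\beta_1}{1+\beta_1}$-type contraction of the momentum variance. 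This makes $\sum_t\mathbb E\|\tilde x_{t+1}-\tilde x_t\|^2 = O(\alpha_t^2 T\, dG^2/n) = O(\alpha^2 dG^2)$ once the signal part is absorbed, and everything collapses, as in Theorem~\ref{thm1}, to $\frac{1}{\max\{\sqrt\epsilon,G\}}\sum_{t=1}^T\alpha_t\mathbb E\|\nabla f(x_t)\|^2 \le 2(f(x_1)-f^*) + O(1)$. Dividing through by $\alpha_t T/\max\{\sqrt\epsilon,G\}$ and using $\frac{1}{\alpha_t T} = \frac{\sqrt{1+T/n}}{\alpha T} \le \frac{1}{\alpha T} + \frac{1}{\alpha\sqrt{nT}}$ produces exactly the claimed $\frac{C_1}{\sqrt{nT}} + \frac{C_1+C_2}{T}$, with $C_1$ collecting the initialization, smoothness and variance constants --- which is why it appears in both pieces --- and $C_2$ the additional error-feedback constants carrying $\gamma_1/(1-\gamma)$ and $1/\sqrt\epsilon$.
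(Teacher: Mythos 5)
Your overall architecture --- the three-term decomposition of $\langle A_t, g_t\rangle$, taking conditional expectations only of the averaged gradient $\frac1n\sum_i g_t^{(i)}$ (which is exactly how the paper removes the iid hypothesis), the telescoping of $\|\alpha_t\hat v_t^{-1/2}\|_1$, the geometric error-feedback recursion, and the linear-speedup step via the variance $O(G^2d/n)$ of the averaged fluctuation (the paper's Eq.~\eqref{ga-mg} plus Lemma~\ref{lemma6}) --- matches the paper's proof. But there is one genuine gap, and it sits precisely at the point where Algorithm~\ref{alg:gradient-average} differs from Algorithm~\ref{alg:parameter-average}. You assert that the shift identity \eqref{aux-diff}, $x_t-\tilde x_t=\alpha_{t-1}e_{t-1}$, carries over ``analogously.'' It does not: in the gradient-averaging algorithm the auxiliary sequence is $\tilde x_{t+1}=\tilde x_t-\alpha_t\hat v_t^{-1/2}m_t$ with a \emph{time-varying global} preconditioner, the actual update uses the preconditioned $\Delta_t$, yet the error $e_t^{(i)}=m_t^{(i)}+\frac{\alpha_{t-1}}{\alpha_t}e_{t-1}^{(i)}-\Delta_t^{(i)}$ accumulates a mixture of unpreconditioned momentum and preconditioned corrections. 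Unrolling the recursion gives
\begin{equation*}
x_t-\tilde x_t=\alpha_{t-1}\hat v_{t-1}^{-1/2}e_{t-1}+\sum_{s=1}^{t-2}\alpha_s\bigl(\hat v_s^{-1/2}-\hat v_{s+1}^{-1/2}\bigr)e_s ,
\end{equation*}
and the residual sum does not vanish for free. The paper's Lemma~\ref{lemma3} kills it by an orthogonality argument that depends on the specific design of the algorithm: $v_{s+1}$ is updated only on the coordinates $\mathcal{I}_s$ (because $h_{s+1}=(g_{s+1})_{\mathcal{I}_s}$), so $\hat v_{s+1}$ can exceed $\hat v_s$ only on $\mathcal{I}_s$, while $e_s$ vanishes exactly on $\mathcal{I}_s$; hence $(\hat v_s^{-1/2}-\hat v_{s+1}^{-1/2})$ and $e_s$ have disjoint supports. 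Your a priori bound $\epsilon\le\hat v_t\le\max\{\epsilon,G^2\}$ is not enough here --- boundedness alone leaves a drift term that accumulates over $T$ iterations and would destroy the $O(1/T)$ control of $\sum_t\mathbb E\|x_t-\tilde x_t\|^2$.

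A second, related imprecision: the compression contraction of Lemma~\ref{lemma2} is stated for the \emph{preconditioned} quantities $\tilde\Delta_t^{(i)}=\hat v_t^{-1/2}(m_t^{(i)}+\frac{\alpha_{t-1}}{\alpha_t}e_{t-1}^{(i)})$, so the recursion you want is on $\mathbb E\|\alpha_t\hat v_t^{-1/2}e_t\|^2$ (the paper's Eq.~\eqref{err3}), not on $\mathbb E\|\alpha_t e_t\|^2$ as in \eqref{err2}; this is consistent with the corrected shift identity and is what makes the whole error-feedback chain close. Once Lemma~\ref{lemma3} and the preconditioned form of the recursion are in place, the rest of your argument --- absorption of the signal part of $m_t$ into the $-\Theta(\alpha_t)\|\nabla f(x_t)\|^2$ budget, the $G^2d/n$ variance from worker independence, and the final split of $\sqrt{1+T/n}$ --- goes through exactly as in the paper.
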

Similar to the analysis of Algorithm~\ref{alg:parameter-average}, we also define $e_t = \frac{1}{n} \sum_{i=1}^n e_t^{(i)}$ and define an auxiliary sequence $\tilde{x}_t$ in the convergence analysis, which satisfies $\tilde{x}_1 = x_1$ and
\begin{align} \label{eq:40}
    \tilde{x}_{t+1} = \tilde{x}_t - \frac{1}{n} \sum_{i=1}^n \alpha_t \hat{v}_t^{-1/2} m_t^{(i)}.
\end{align}
We can prove that sequence $\tilde{x}_t$ satisfies the following Lemma~\ref{lemma3}
\begin{lemma} \label{lemma3}
In Algorithm~\ref{alg:gradient-average}, we always have
\begin{equation}
    x_t - \tilde{x}_t = \alpha_{t-1} \hat{v}_{t-1}^{-1/2} e_{t-1}.
\end{equation}
\end{lemma}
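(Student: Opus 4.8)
The plan is to prove the identity by induction on $t$, mirroring \eqref{aux-diff} for Algorithm~\ref{alg:parameter-average} but keeping the diagonal factor $\hat v_t^{-1/2}$ along for the ride. The base case $t=1$ is immediate: $x_1-\tilde x_1=\mathbf 0$ because $\tilde x_1=x_1$, and $\alpha_0\hat v_0^{-1/2}e_0=\mathbf 0$ since $e_0^{(i)}=\mathbf 0$ for every $i$ (so whatever value is assigned to $\alpha_0$ is irrelevant).

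For the inductive step, assume $x_t-\tilde x_t=\alpha_{t-1}\hat v_{t-1}^{-1/2}e_{t-1}$. Subtracting the auxiliary recursion \eqref{eq:40} from the parameter update $x_{t+1}=x_t-\alpha_t\Delta_t$ of Algorithm~\ref{alg:gradient-average} and inserting the hypothesis gives
\[
x_{t+1}-\tilde x_{t+1}=\alpha_{t-1}\hat v_{t-1}^{-1/2}e_{t-1}-\alpha_t\Delta_t+\frac{\alpha_t}{n}\,\hat v_t^{-1/2}\sum_{i=1}^n m_t^{(i)}.
\]
I would then eliminate $\alpha_t\Delta_t$. From the error-feedback step $e_t^{(i)}=m_t^{(i)}+\frac{\alpha_{t-1}}{\alpha_t}e_{t-1}^{(i)}-\Delta_t^{(i)}$ one gets $\frac1n\sum_{i=1}^n\Delta_t^{(i)}=\frac1n\sum_{i=1}^n m_t^{(i)}+\frac{\alpha_{t-1}}{\alpha_t}e_{t-1}-e_t$, and since the unsketching operator $\mathcal U(\cdot,\hat v_t)$ applies the $i$-independent diagonal scaling $\hat v_t^{-1/2}$, which commutes both with the averaging over $i$ and with the coordinate restriction to $\mathcal I_t$, the update direction satisfies
\[
\alpha_t\Delta_t=\hat v_t^{-1/2}\left(\frac{\alpha_t}{n}\sum_{i=1}^n m_t^{(i)}+\alpha_{t-1}e_{t-1}-\alpha_t e_t\right).
\]
Substituting this back, the $\frac1n\sum_i m_t^{(i)}$ terms cancel and one is left with
\[
x_{t+1}-\tilde x_{t+1}=\alpha_t\hat v_t^{-1/2}e_t+\alpha_{t-1}\left(\hat v_{t-1}^{-1/2}-\hat v_t^{-1/2}\right)e_{t-1}.
\]

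The crux---and, I expect, the only genuinely non-routine point---is to show that the last term vanishes, i.e.\ that $\hat v_{t-1}$ and $\hat v_t$ coincide on every coordinate in the support of $e_{t-1}$. This is precisely what the definition $h_t^{(i)}=(g_t^{(i)})_{\mathcal I_{t-1}}$ together with the choice of $\mathcal I_t$ as the index set returned by the unsketching operator in Algorithm~\ref{alg:gradient-average} are designed for. Since $\Delta_{t-1}^{(i)}$ retains only the coordinates in $\mathcal I_{t-1}$, the compensation term $e_{t-1}^{(i)}=\bigl(m_{t-1}^{(i)}+\frac{\alpha_{t-2}}{\alpha_{t-1}}e_{t-2}^{(i)}\bigr)-\Delta_{t-1}^{(i)}$, and hence $e_{t-1}=\frac1n\sum_{i=1}^n e_{t-1}^{(i)}$, is supported on the complement $\mathcal I_{t-1}^{c}$. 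On $\mathcal I_{t-1}^{c}$ the aggregate $h_t=\frac1n\sum_{i=1}^n(g_t^{(i)})_{\mathcal I_{t-1}}$ is identically zero, so $v_t=\beta_2 v_{t-1}$ there, whence $\hat v_t=\max\{\hat v_{t-1},\beta_2 v_{t-1}\}=\hat v_{t-1}$ on $\mathcal I_{t-1}^{c}$, using $\hat v_{t-1}\ge v_{t-1}\ge\beta_2 v_{t-1}\ge\mathbf 0$ componentwise. Therefore $\bigl(\hat v_{t-1}^{-1/2}-\hat v_t^{-1/2}\bigr)e_{t-1}=\mathbf 0$, the induction closes, and the identity follows. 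Everything preceding this observation that the preconditioner is frozen on the un-updated coordinates is just coordinate-wise bookkeeping.
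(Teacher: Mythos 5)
Your proof is correct and follows essentially the same route as the paper: the paper unrolls the recursion into a telescoping sum with residual terms $\sum_{s=1}^{t-2}\alpha_s(\hat v_s^{-1/2}-\hat v_{s+1}^{-1/2})e_s$, whereas you close the same telescope one step at a time by induction, but the decisive observation is identical in both, namely that $e_s$ is supported on $\mathcal I_s^{c}$ while $h_{s+1}$ is supported on $\mathcal I_s$, so $\hat v_{s+1}=\hat v_s$ on the support of $e_s$ and the cross term vanishes. Your handling of where the diagonal factor $\hat v_t^{-1/2}$ sits in the error-feedback recursion is, if anything, slightly more explicit than the paper's.
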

\begin{proof}
By the definition of $\tilde{x}_t$, $\Delta_t$ and $e_t$, we have
\begin{align}
     x_t - \tilde{x}_t = \alpha_{t-1} \hat{v}_{t-1}^{-1/2} e_{t-1} + \sum_{s=1}^{t-2} \alpha_s (\hat{v}_s^{-1/2} - \hat{v}_{s+1}^{-1/2}) e_s 
\end{align}
As $\hat{v}_s \ge v_s$ for each element, the coordinate in $v_{s+1}$ which is not updated at iteration $s+1$ keeps the same as $v_s$ and is always smaller that the corresponding coordinate in $\hat{v}_s$. Moreover, since $\hat{v}_{s+1} = \max \{ \hat{v}_s, v_{s+1} \}$, we reach the conclusion that for any index $j \notin \mathcal{I}_s$, the value of $j$-th coordinate in term $(\hat{v}_s^{-1/2} - \hat{v}_{s+1}^{-1/2})$ must be $0$. On the other hand, by the definition of $\Delta_t$ and $e_t$, for any index $j \in \mathcal{I}_s$, the $j$-th coordinate of $e_s$ is always $0$. Therefore, term $(\hat{v}_s^{-1/2} - \hat{v}_{s+1}^{-1/2})$ and $e_s$ are orthogonal and we can prove our Lemma~\ref{lemma3}.
\end{proof}
Next we will provide the proof outline of Theorem~\ref{thm2}.
\begin{proof}
 We define
 \vspace{-10pt}
\begin{equation}
    g_t = \frac{1}{n} \sum_{i=1}^n g_t^{(i)},\ m_t = \frac{1}{n} \sum_{i=1}^n m_t^{(i)}.
\end{equation}
It automatically satisfies
\begin{equation}
    \label{exp_avg}
    m_t = (1 - \beta_1) m_{t-1} + \beta_1 g_t,\ \tilde{x}_{t+1} = \tilde{x}_t - \alpha_t \hat{v}_t^{-1/2} m_t
\end{equation}
Let $A_t = \alpha_t \hat{v}_t^{-1/2} \nabla f(\tilde{x}_t)$ for $t=1, \cdots, T$ and $A_0 = A_1$. By Eq.~(\ref{exp_avg}) and $m_0 = \mathbf{0}$, it is easy to check
\begingroup
\small
\begin{align}
\label{OL-ga1}
    \sum_{t=1}^T \langle A_t, g_t \rangle &= \frac{\beta_1}{1 - \beta_1} \langle A_T, m_T \rangle + \sum_{t=1}^T \langle A_t, m_t \rangle \notag \\
    & \quad + \frac{\beta_1}{1 - \beta_1} \sum_{t=1}^T \langle A_t - A_{t+1}, m_t \rangle 
\end{align}
\endgroup
\vspace*{-10pt} \\
The left hand side of Eq.~(\ref{OL-ga1}) can be rewritten by
\begingroup
\small
\begin{align}
\label{OL-ga2}
    \nonumber \langle A_t, g_t \rangle &= \langle \alpha_{t-1} \hat{v}_{t-1}^{-1/2} \nabla f(x_t), g_t \rangle - \langle (\alpha_{t-1} \hat{v}_{t-1}^{-1/2} - \alpha_{t} \hat{v}_{t}^{-1/2})   \\
    & \cdot \nabla f(\tilde{x}_t), g_t \rangle - \langle \alpha_{t} \hat{v}_{t-1}^{-1/2} (\nabla f(x_t) - \nabla f(\tilde{x}_t)), g_t \rangle
\end{align}
\endgroup
Similar to Sketched-AMSGrad (GA), we want to obtain $\| \nabla f(x_t) \|^2$ by taking expectation on $g_t$. However, we cannot do this by taking expectation directly on $\langle A_t, g_t \rangle$ because $\hat{v}_t$ is also determined by $\xi_t^{(i)}$. But the previous value $\hat{v}_{t-1}$ does not depend on $\xi_t^{(i)}$. Therefore, we have
\begingroup
\small
\begin{align}
    \mathbb{E}_{\xi_t} \langle \alpha_{t-1} \hat{v}_{t-1}^{-1/2} \nabla f(x_t), g_t \rangle = \langle \alpha_{t-1} \hat{v}_{t-1}^{-1/2} \nabla f(x_t), \nabla f(x_t) \rangle
\end{align}
\endgroup
By Young's inequality and Assumption 3 we have
\begingroup
\small
\begin{equation}
\label{OL-ga1-T1}
\hspace{-15pt}
    \langle A_T, m_T \rangle \!=\! \langle \nabla f(\tilde{x}_T), \alpha_T \hat{v}_T^{-1/2} m_T \rangle \!\le\! L \lVert \alpha_T \hat{v}_T^{-1/2} m_T \rVert^2 \!+\! \frac{G^2d}{4L}
\end{equation}
\endgroup
The second term on the right of Eq.~(\ref{OL-ga1}) can be estimated by
\begingroup
\begin{align}
\label{OL-ga1-T2}
    \langle A_t, m_t \rangle &= \langle \nabla f(\tilde{x}_t), \alpha_t \hat{v}_t^{-1/2} m_t \rangle = \langle \nabla f(\tilde{x}_t), \tilde{x}_t - \tilde{x}_{t+1} \rangle \notag \\
    &\le f(\tilde{x}_t) - f(\tilde{x}_{t+1}) + \frac{L}{2} \lVert \tilde{x}_{t+1} - \tilde{x}_t \rVert^2
\end{align}
\endgroup
where the last inequality is due to Assumption 1. According to Assumptions 1 and 3, Eq.~(\ref{exp_avg}) and $\hat{v}_{t+1} \ge \hat{v}_t$, we can obtain
\begingroup
\allowdisplaybreaks
\begin{align}
\label{OL-ga1-T3}
\langle A_t - A_{t+1}, m_t \rangle &\le G^2(\lVert \alpha_t \hat{v}_t^{-1/2} \rVert_1 - \lVert \alpha_{t+1} \hat{v}_{t+1}^{-1/2} \rVert_1) \notag \\
& \quad + L \lVert \tilde{x}_{t+1} - \tilde{x}_t \rVert^2
\end{align}
\endgroup
Similarly, we can bound the second right term of Eq.~(\ref{OL-ga2})
\begingroup
\begin{align}
\label{OL-ga2-T2}
    & \quad \mathbb{E} \langle (\alpha_{t-1} \hat{v}_{t-1}^{-1/2} - \alpha_{t} \hat{v}_{t}^{-1/2}) \nabla f(\tilde{x}_t), g_t \rangle \notag \\
    &\le G^2 (\lVert \alpha_{t-1} \hat{v}_{t-1}^{-1/2} \rVert_1 - \lVert \alpha_{t} \hat{v}_{t}^{-1/2} \rVert_1)
\end{align}
\endgroup
Now we only need to estimate the last term of Eq.~(\ref{OL-ga2}). With probability $p > 1 - \delta$, it satisfies $\lVert \alpha_t e_t \rVert^2 \le (1 - \frac{k}{d})\ \lVert \frac{1}{n} \sum_{i=1}^n \alpha_t [\hat{v}_{t}^{(i)}]^{-1/2} m_t^{(i)} + \alpha_{t-1} e_{t-1} \rVert^2$. Otherwise with probability $p < \delta$, $\Delta_t$ is still some coordinates of $\tilde{\Delta}_t$. It always satisfies $\lVert \alpha_t e_t \rVert \le \lVert \frac{1}{n} \sum_{i=1}^n \alpha_t [\hat{v}_{t}^{(i)}]^{-1/2} m_t^{(i)} + \alpha_{t-1} e_{t-1} \rVert$. Hence we can get
\begingroup
\begin{align}
\label{err3}
\mathbb{E} \lVert \alpha_t \hat{v}_t^{-1/2} e_t \rVert^2 \le \gamma_1 \sum_{s=1}^t \gamma^{t-s} \mathbb{E} \lVert \alpha_s \hat{v}_s^{-1/2} m_s \rVert^2
\end{align}
\endgroup
Taking expectation, we have estimation
\begingroup
\small
\begin{align}
\label{OL-ga2-T3-s}
& \quad \mathbb{E} \langle \alpha_{t} \hat{v}_{t-1}^{-1/2} (\nabla f(x_t) - \nabla f(\tilde{x}_t)), g_t \rangle \notag \\
&= \mathbb{E} \langle \alpha_{t} \hat{v}_{t-1}^{-1/2} (\nabla f(x_t) - \nabla f(\tilde{x}_t)), \nabla f(x_t) \rangle \notag \\
&\le \frac{1}{2} \mathbb{E} \langle \alpha_t [\hat{v}_{t-1}^{(i)}]^{-1/2} \nabla f(x_t), \nabla f(x_t) \rangle + \frac{\alpha_t L^2}{2 \sqrt{\epsilon}} \mathbb{E} \| x_t - \tilde{x}_t \|^2
\end{align}
\endgroup
The inequality results from Cauchy-Schwartz inequality and Assumption 1. Sum the last term of Eq.~(\ref{OL-ga2-T3-s}) from $t=1$ to $T$ and we have
\begingroup
\small
\allowdisplaybreaks
\begin{align}
\label{OL-ga2-T3}
\hspace{-20pt}
    & \quad \sum_{t=1}^T \mathbb{E} \lVert x_t - \tilde{x}_t \rVert^2 = \sum_{t=1}^T \mathbb{E} \lVert \alpha_{t-1} \hat{v}_{t-1}^{-1/2} e_{t-1} \rVert^2 \\
    &\le \!\gamma_1 \sum_{t=1}^T \sum_{s=1}^{t-1} \gamma^{t-1-s} \mathbb{E} \lVert \alpha_s \hat{v}_s^{-1/2} m_s \rVert^2 \!\le\! \frac{\gamma_1}{(1 - \gamma)} \sum_{t=1}^T \mathbb{E} \lVert \alpha_t \hat{v}_t^{-1/2} m_t \rVert^2 \notag
\end{align}
\endgroup
Combine Eqs.~(\ref{OL-ga1}), (\ref{OL-ga2}), (\ref{OL-ga1-T1}), (\ref{OL-ga1-T2}), (\ref{OL-ga1-T3}), (\ref{OL-ga2-T2}), (\ref{OL-ga2-T3-s}) and (\ref{OL-ga2-T3}). Take expectation and we have
\begingroup
\small
\allowdisplaybreaks
\begin{align}
\label{OTL-ga-2}
\nonumber & \quad \frac{1}{2} \sum_{t=1}^T \mathbb{E} \langle \alpha_{t-1} \hat{v}_{t-1}^{-1/2} \nabla f(x_t), \nabla f(x_t) \rangle \\
&\le f(\tilde{x}_1) - f(\tilde{x}_{T+1}) + \frac{\beta_1 G^2 d}{4L(1 - \beta_1)} + (\frac{L}{2} + \frac{2\beta_1 L}{1 - \beta_1}) \notag \\
& \quad \cdot \sum_{t=1}^T \mathbb{E} \lVert \tilde{x}_t - \tilde{x}_{t+1} \rVert^2 + G^2 (\lVert \alpha_0 \hat{v}_0^{-1/2} \rVert_1 - \mathbb{E} \lVert \alpha_{T} \hat{v}_{T}^{-1/2} \rVert_1) \notag \\
& \quad \!+\! \frac{\beta_1 G^2}{1 - \beta_1} (\lVert \alpha_1 \hat{v}_1^{-1/2} \rVert_1 \!-\! \mathbb{E} \lVert \alpha_{T+1} \hat{v}_{T+1}^{-1/2} \rVert_1) \!+\! \frac{L}{2} \sum_{t=1}^T \mathbb{E} \lVert \alpha_{t} \hat{v}_{t}^{-1/2} g_t \rVert^2 \notag \\
& \quad + \frac{\alpha_t L^2 \gamma_1}{2\sqrt{\epsilon} (1 - \gamma)} \sum_{t=1}^T \mathbb{E} \lVert \alpha_t \hat{v}_t^{-1/2} m_t \rVert^2 
\end{align}
\endgroup
As $\hat{v}_{t+1} \ge \hat{v}_t$, we have
\begingroup
\small
\begin{align}
\label{ga-mg}
\nonumber & \quad \sum_{t=1}^T \lVert \alpha_t \hat{v}_t^{-1/2} m_t \rVert^2 = (1 - \beta_1)^2 \sum_{t=1}^T \lVert \sum_{s=1}^t \beta_1^{t-s} \alpha_t \hat{v}_t^{-1/2} g_s \rVert^2 \\
&= (1 - \beta_1)^2 \sum_{t=1}^T \sum_{s,j=1}^t \beta_1^{2t-s-j} \langle \alpha_t \hat{v}_t^{-1/2} g_s, \alpha_t \hat{v}_t^{-1/2} g_j \rangle  \\
&\le (1 - \beta_1) \sum_{t=1}^T \sum_{s=1}^t \beta_1^{t-s} \lVert \alpha_s \hat{v}_s^{-1/2} g_s \rVert^2 \le \sum_{t=1}^T \lVert \alpha_t \hat{v}_t^{-1/2} g_t \rVert^2. \notag
\end{align}
\endgroup
By Lemma~\ref{lemma6} (shown after the proof) we also know that
\begingroup
\small
\begin{align}
\label{g-nabla}
\mathbb{E} \lVert g_t - \nabla f(x_t) \rVert^2 = \mathbb{E} \lVert \frac{1}{n} \sum_{i=1}^n (g_t^{(i)} - \nabla f_i(x_t)) \rVert^2 \le \frac{G^2d}{n}.
\end{align}
\endgroup
According to Eqs.~(\ref{exp_avg}), (\ref{OTL-ga-2}), (\ref{ga-mg}), (\ref{g-nabla}) and Assumption 2 we can obtain
\begingroup
\small
\allowdisplaybreaks
\begin{align}
\label{OTL-ga-3}
\nonumber & \quad \frac{1}{2} \sum_{t=1}^T \mathbb{E} \langle \alpha_{t-1} \hat{v}_{t-1}^{-1/2} \nabla f(x_t), \nabla f(x_t) \rangle \\
&\le f(x_1) \!-\! f^* \!+\! \frac{\beta_1 G^2 d}{4L(1 - \beta_1)} + \frac{G^2 \alpha_t}{\sqrt{ \epsilon}(1 - \beta_1)} + C_0 \sum_{t=1}^T \mathbb{E} \lVert \alpha_t \hat{v}_t^{-1/2} g_t \rVert^2 \notag \\
&\le f(x_1) - f^* + \frac{\beta_1 G^2 d}{4L(1 - \beta_1)} + \frac{G^2 \alpha_t}{\sqrt{ \epsilon}(1 - \beta_1)} + \frac{2C_0 G^2d}{n\epsilon} \sum_{t=1}^T \alpha_t^2 \notag \\
& \quad + \frac{2C_0}{\epsilon} \sum_{t=1}^T \alpha_t^2 \mathbb{E} \lVert \nabla f(x_t) \rVert^2.
\end{align}
\endgroup
where $C_0 = \frac{(1 + \beta_1)L}{1 - \beta_1} + \frac{\alpha_t L^2 \gamma_1}{2\sqrt{\epsilon} (1 - \gamma)}$ is a constant as $0 < \alpha_t < \alpha$. The left side of Eq.~(\ref{OTL-ga-3}) can be lower bounded by
\begingroup
\small
\begin{equation}
    \label{OTL-g3-left}
    \sum_{t=1}^T \mathbb{E} \langle \alpha_{t-1} \hat{v}_{t-1}^{-1/2} \nabla f(x_t), g_t \rangle \ge \sum_{t=1}^T \frac{\alpha_t}{G} \mathbb{E} \lVert \nabla f(x_t) \rVert^2.
\end{equation}
\endgroup
Since $\alpha_t \le \frac{\epsilon}{8C_0 G}$ when $T$ is large, we have
\begingroup
\begin{align}
    \frac{\alpha}{4G\sqrt{1 + \frac{T}{n}}} &\sum_{t=1}^T \mathbb{E} \lVert \nabla f(x_t) \rVert^2 \le f(x_1) - f^* + \frac{\beta_1 G^2 d}{4L(1 - \beta_1)} \notag \\
    & \quad + \frac{G^2 \alpha_t}{\sqrt{\epsilon}(1 - \beta_1)} + \frac{2C_0 G^2d}{n\epsilon} \sum_{t=1}^T \alpha_t^2
\end{align}
\endgroup
Let
\begingroup
\begin{align}
C_1 &= \frac{4G(f(x_1) - f^*)}{\alpha} + \frac{\beta_1 G^3 d}{L \alpha (1 - \beta_1)} + \frac{8 (1 + \beta_1) L G^3 d \alpha}{\epsilon (1 - \beta_1)} \notag \\ 
C_2 &= \frac{4 L^2 G^3 d \alpha^2 \gamma_1}{\epsilon^{3/2} (1 - \gamma)} + \frac{4G^3}{\sqrt{\epsilon}(1 - \beta_1)} 
\end{align}
\endgroup
Then we can reach the conclusion
\begingroup
\begin{align}
\frac{1}{T} \sum_{t=1}^T \mathbb{E} \lVert \nabla f(x_t) \rVert^2 \le \frac{C_1}{\sqrt{nT}} + \frac{C_1 + C_2}{T}.
\end{align}
\endgroup
using the fact that $\sqrt{1 + x} \le 1 + \sqrt{x}$.
\end{proof}

\begin{lemma} \label{lemma6}
Let $X_1, \cdots, X_k$ be independent stochastic variables with $0$ means. Then we have $\ \mathbb{E} \lVert \sum_{j=1}^k X_j\rVert^2 = \sum_{j=1}^k \mathbb{E} \lVert X_j\rVert^2$.
\end{lemma}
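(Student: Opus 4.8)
The plan is to reduce the identity to the standard fact that, for independent zero-mean random vectors, the inner product of two distinct terms has zero expectation; everything else is bilinearity of the Euclidean inner product and linearity of expectation. First I would expand
$\lVert \sum_{j=1}^k X_j \rVert^2 = \sum_{j=1}^k \lVert X_j \rVert^2 + \sum_{1 \le i \ne j \le k} \langle X_i, X_j \rangle$,
which is immediate from $\lVert v \rVert^2 = \langle v, v\rangle$ and bilinearity. Taking expectations on both sides and moving $\mathbb{E}$ inside the finite sums (legitimate since each $X_j$ is assumed square-integrable, so each cross term $\langle X_i, X_j\rangle$ is integrable by Cauchy--Schwarz), the claim is equivalent to showing $\mathbb{E}\langle X_i, X_j\rangle = 0$ whenever $i \ne j$.

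To handle the cross term I would pass to coordinates: write $\langle X_i, X_j \rangle = \sum_{\ell=1}^d X_i^{(\ell)} X_j^{(\ell)}$. For $i \ne j$ the vectors $X_i$ and $X_j$ are independent, hence so are the scalars $X_i^{(\ell)}$ and $X_j^{(\ell)}$ for each $\ell$, so $\mathbb{E}[X_i^{(\ell)} X_j^{(\ell)}] = \mathbb{E}[X_i^{(\ell)}]\,\mathbb{E}[X_j^{(\ell)}] = 0$, since each factor is a coordinate of a zero-mean vector. Summing over $\ell$ gives $\mathbb{E}\langle X_i, X_j \rangle = 0$, and plugging this back yields $\mathbb{E}\lVert \sum_{j=1}^k X_j \rVert^2 = \sum_{j=1}^k \mathbb{E}\lVert X_j\rVert^2$, as claimed. (One could also argue coordinate-free via $\mathbb{E}\langle X_i, X_j\rangle = \langle \mathbb{E} X_i, \mathbb{E} X_j\rangle = 0$, using that independence lets the expectation factor through the bilinear form, but the coordinate version is the most transparent.)

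There is essentially no obstacle here: the lemma is a vector-valued version of the ``variance of a sum of independent variables equals sum of variances'' identity, and the proof is two lines once the square is expanded. The only points worth a sentence are (i) the implicit square-integrability needed for the statement to be meaningful and for the interchange of sum and expectation to be valid, and (ii) that pairwise independence (not full independence) already suffices, which is all that is used. In the paper's application in~(\ref{g-nabla}) one takes $X_i = \tfrac1n\big(g_t^{(i)} - \nabla f_i(x_t)\big)$; these are bounded by Assumption~3 (hence square-integrable), have zero conditional mean given $x_t$, and are independent across workers $i$ by the independent sampling of the $\xi_t^{(i)}$, so the lemma applies and combines with the $\ell_\infty$-bound $G$ to give the $G^2 d / n$ bound.
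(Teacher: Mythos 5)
Your proof is correct: it is the standard expansion of the squared norm, with the cross terms $\mathbb{E}\langle X_i, X_j\rangle$ vanishing by independence and zero means, which is exactly the argument this well-known lemma calls for (the paper itself states the lemma without proof, treating it as a standard fact). Your added remarks on square-integrability and on pairwise independence sufficing are accurate, and your sanity check that the lemma applies to $X_i = \frac{1}{n}(g_t^{(i)} - \nabla f_i(x_t))$ in the bound preceding the lemma is consistent with how the paper uses it.
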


\begin{figure*}[!ht]
  \centering
  \hspace{-2pt}
  \begin{subfigure}{0.32\linewidth}
    \includegraphics[width=\textwidth]{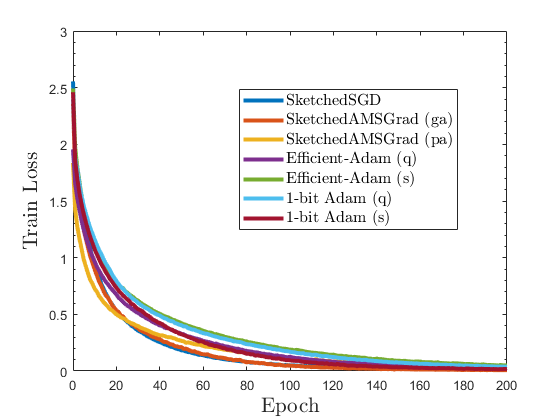}
  \end{subfigure}
  \hfill
  \begin{subfigure}{0.32\linewidth}
    \includegraphics[width=1.0\textwidth]{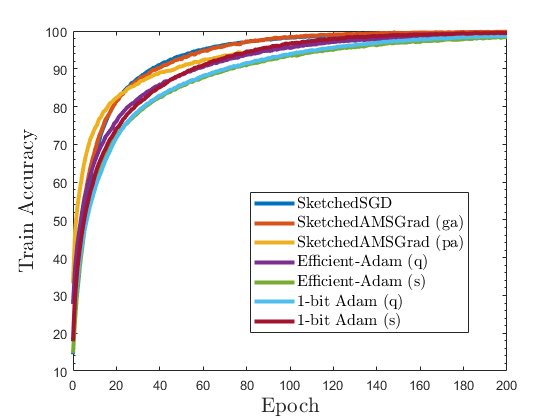}
  \end{subfigure}
  \hfill
  \begin{subfigure}{0.32\linewidth}
    \includegraphics[width=1.0\textwidth]{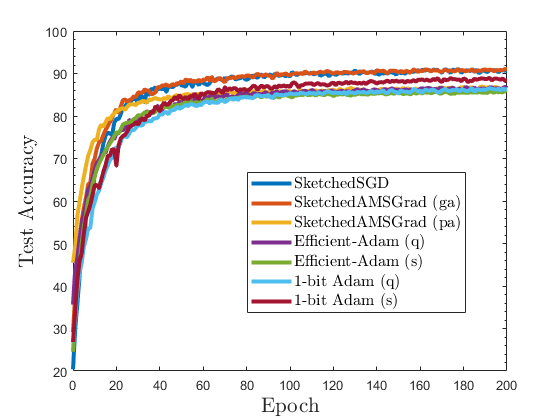}
  \end{subfigure}
  
  \begin{subfigure}{0.32\linewidth}
    \includegraphics[width=1.0\textwidth]{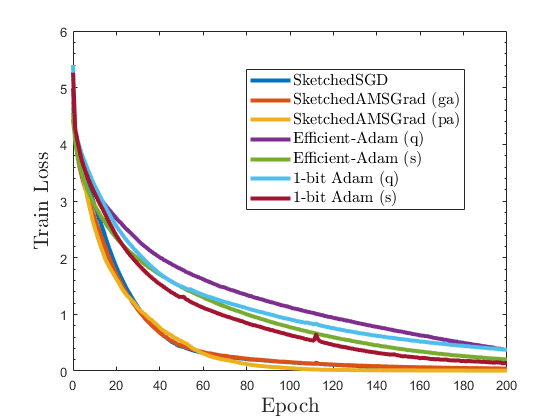}
  \end{subfigure}
  \hfill
  \begin{subfigure}{0.32\linewidth}
    \includegraphics[width=1.0\textwidth]{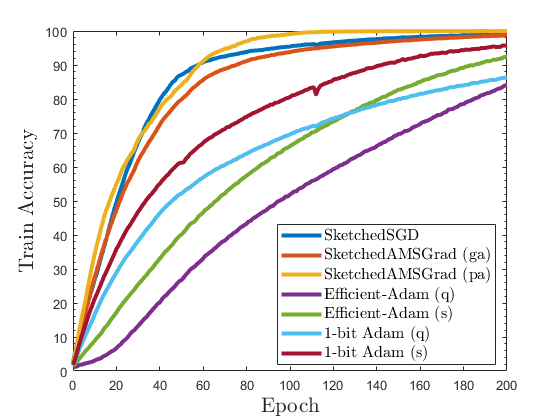}
  \end{subfigure}
  \hfill
  \begin{subfigure}{0.32\linewidth}
    \includegraphics[width=1.0\textwidth]{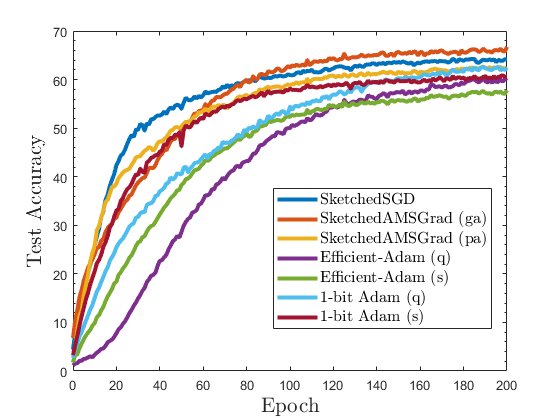}
  \end{subfigure}
  \caption{The experimental results of training ResNet-50 on CIFAR10 and CIFAR100. Figures (a), (b) and (c) show the experimental results on CIFAR10. Figures (d), (e) and (f) show the experimental results on CIFAR100. Figures (a) and (d) show the train loss value. Figures (b) and (e) show the train accuracy. Figures (c) and (f) show the test accuracy.}
  \label{resnet-cifar}
  \vspace{-10pt}
\end{figure*}
\vspace{-5pt}
\begin{corollary}
In Theorem~\ref{thm2}, we can see the dominating term is $O(\frac{1}{\sqrt{nT}})$, which achieves a linear speedup compared with AMSGrad in nonconvex optimization \cite{nonconvex-adam-1}.
\end{corollary}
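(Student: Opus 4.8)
The plan is to read the asymptotics straight out of the bound proved in Theorem~\ref{thm2}, namely $\frac{1}{T}\sum_{t=1}^T \mathbb{E}\|\nabla f(x_t)\|^2 \le \frac{C_1}{\sqrt{nT}} + \frac{C_1+C_2}{T}$ with $C_1,C_2$ independent of $T$, and then convert it into a comparison of iteration complexities with single-machine AMSGrad.

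First I would identify the dominating term. Since $\bigl(\tfrac{1}{\sqrt{nT}}\bigr)\big/\bigl(\tfrac{1}{T}\bigr)=\sqrt{T/n}$, the first term $\tfrac{C_1}{\sqrt{nT}}$ is the larger of the two exactly when $T\ge n$; as $C_1,C_2$ do not grow with $T$, the right-hand side is therefore $O\!\bigl(\tfrac{1}{\sqrt{nT}}\bigr)$ in the relevant large-$T$ regime $T\ge n$ (which is also where the stepsize $\alpha_t=\alpha/\sqrt{1+T/n}$ behaves like $\alpha\sqrt{n/T}$ and where the condition $\alpha_t\le\epsilon/(8C_0 G)$ used inside the proof of Theorem~\ref{thm2} is in force). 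Note that this holds without assuming the local distributions $\{D_i\}$ are identical.

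Next I would make the speedup precise. To guarantee $\min_{t\le T}\mathbb{E}\|\nabla f(x_t)\|^2\le\varepsilon^2$ it suffices that the averaged quantity be at most $\varepsilon^2$, which by the $O(1/\sqrt{nT})$ rate requires $T=O\!\bigl(\tfrac{1}{n\varepsilon^4}\bigr)$ per-worker iterations, whereas single-machine AMSGrad attains the rate $O(1/\sqrt{T})$ \cite{nonconvex-adam-1} and hence needs $T=O(1/\varepsilon^4)$ iterations to reach the same target. The ratio of these two counts is $n$, which is exactly the linear-speedup claim of the corollary; equivalently, for a fixed per-worker budget $T$ the guarantee of Theorem~\ref{thm2} is tighter by a factor $\sqrt{n}$ than the single-machine one. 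There is no genuine obstacle here: the only point requiring care is to phrase the statement as an asymptotic comparison valid once $T\ge n$, rather than an improvement holding at every finite $T$, since at small $T$ the $\tfrac{1}{T}$ term (which carries no $n$-dependence) would dominate.
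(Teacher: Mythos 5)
Your proposal is correct and matches the paper's (implicit) reasoning: the paper treats this corollary as an immediate reading of the bound in Theorem~\ref{thm2}, identifying $\frac{C_1}{\sqrt{nT}}$ as the dominating term for large $T$ and comparing against the $O(\frac{1}{\sqrt{T}})$ rate of single-machine AMSGrad. Your added precision --- the $T \ge n$ threshold for dominance and the iteration-complexity ratio $O(\frac{1}{n\varepsilon^4})$ versus $O(\frac{1}{\varepsilon^4})$ --- is exactly the standard formalization of the linear-speedup claim and introduces no gap.
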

\vspace{-5pt}
\begin{remark}
In Algorithm~\ref{alg:gradient-average}, the data distribution $D_i$'s are allowed to be non-identical, which implies it could be used in more general problems such as federated learning \cite{federate}. In both Algorithm~\ref{alg:parameter-average} and Algorithm~\ref{alg:gradient-average}, $\beta_1$ and $\beta_2$ are constants in $(0, 1)$, which is applicable to the common default settings that $\beta_1 = 0.9$ and $\beta_2 = 0.999$. 
\vspace{-0pt}
\end{remark}
\vspace{-0pt}
\subsection{Discussion on the Compression Rate}
\vspace{-0pt}
In this subsection, we will discuss the how the compression rate, \emph{i.e.}, the choice of $k$ influences the convergence rate. In both of Theorem~\ref{thm1} and Theorem~\ref{thm2}, constant $C_1$ is independent on $k$. Hence the dominating term is not affected by the compression rate. This result is the same as many other gradient compression methods. According to the definitions of $C_2$ in Theorem~\ref{thm1} and Theorem~\ref{thm2}, the second dominating term is affected by $k$ with the form $O(\frac{1}{(k/d)^2 T})$ for both parameter averaging and gradient averaging SketchedAMSGrad algorithms.
\vspace{-5pt}
\section{Experiments}
\vspace{-0pt}
\label{experiment}
In this section we will show the experimental results of two distributed data mining tasks of image categorization to validate our methods. All experiments are run on a server with 64-core Intel Xeon E5-2683 v4 2.10GHz processor and 4 Nvidia P40 GPUs. We simulate the edge-based training environment on the GPU server where the root process represents the edge server, each process represents an IoT device and the dataset represents the captured data. The codes are implemented in PyTorch 1.4.0 and CUDA 10.1.
\vspace{-0pt}
\subsection{ResNet on CIFAR} \label{resnet-cifar-task}
\vspace{-3pt}
Our first task is to train ResNet-50 \cite{resnet} using CIFAR10 and CIFAR100 datasets \cite{cifar}, which are benchmark datasets for image classification tasks. Both CIFAR10 and CIFAR100 contain 60,000 $32\times 32$ pixel images with RGB channels, 50,000 of which is regarded as training set and the other 10,000 of which is used for testing. The images are distributed evenly over 10 and 100 classes for CIFAR10 and CIFAR100 respectively. The ResNet-50 model has about 25M parameters. We use cross-entropy loss to train the neural network. 

In our experiment, we compare our SketchedAMSGrad (PA) and SketchedAMSGrad (GA) with Sketched-SGD \cite{sketched-sgd}, Efficient-Adam \cite{efficient-adam} and 1-bit Adam \cite{tang20211}. For Efficient-Adam and 1-bit Adam, we consider both quantization and sparsification as compressor. For gradient quantization, we adopt the following scheme used in \cite{dist-ef-sgd} which is a variant of SignSGD: 
\begingroup
\small
\begin{equation}
\label{compressor-exp}
    C(x) = \frac{\lVert x \rVert_1}{d} sign(x)
\end{equation}
\endgroup
Compressor~(\ref{compressor-exp}) automatically satisfies the compressor assumption Eq~(\ref{compressor}) and achieves about $32\times$ reduction of communication cost. For gradient sparsification, we use top-$k$ as the compressor.

\begin{figure*}[!t]
  \centering
  \begin{subfigure}{0.32\linewidth}
    \includegraphics[width=1.0\textwidth]{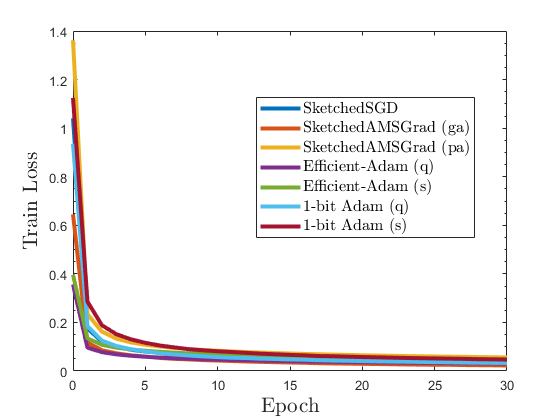}
  \end{subfigure}
  \hfill
  \begin{subfigure}{0.32\linewidth}
    \includegraphics[width=1.0\textwidth]{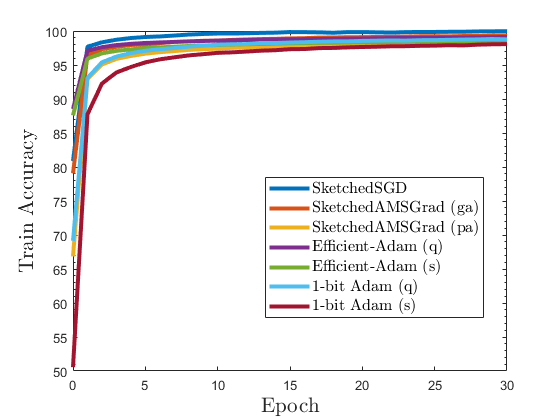}
  \end{subfigure}
  \hfill
  \begin{subfigure}{0.32\linewidth}
    \includegraphics[width=1.0\textwidth]{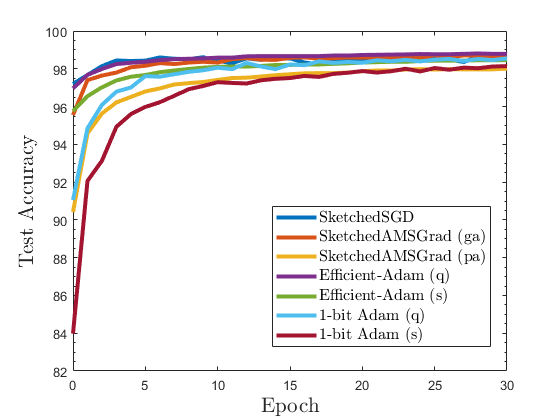}
  \end{subfigure}
  
  \begin{subfigure}{0.32\linewidth}
    \includegraphics[width=1.0\textwidth]{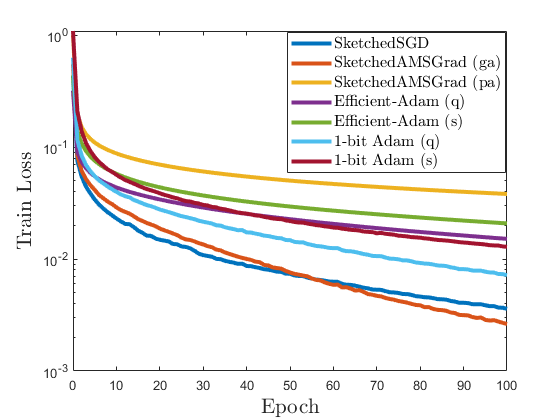}
  \end{subfigure}
  \hfill
  \begin{subfigure}{0.32\linewidth}
    \includegraphics[width=1.0\textwidth]{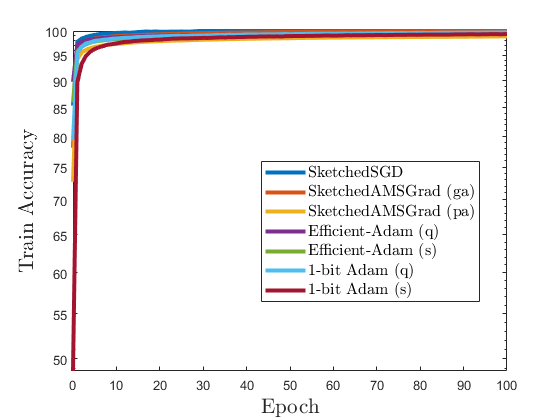}
  \end{subfigure}
  \hfill
  \begin{subfigure}{0.32\linewidth}
    \includegraphics[width=1.0\textwidth]{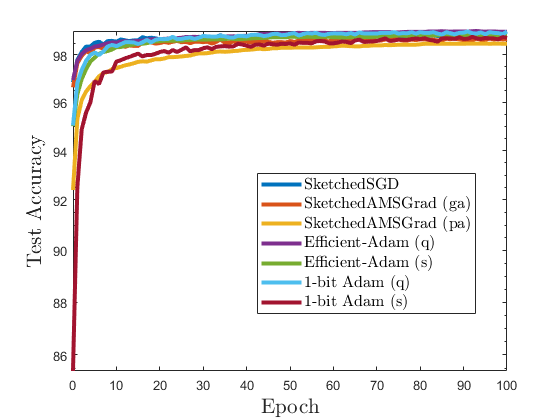}
  \end{subfigure}
  \caption{The experimental results of training LeNet-5 on MNIST. Figures (a), (b) and (c) show the experimental results when the number of workers is 50. Figures (d), (e) and (f) show the experimental results when the number of workers is 100. Figures (a) and (d) show the train loss value. Figures (b) and (e) show the train accuracy. Figures (c) and (f) show the test accuracy.}
  \label{lenet}
  \vspace{-12pt}
\end{figure*}

The number of workers in this task is set to be 16. The batch-size on each worker node is 32. Hence the total batch-size at each iteration is 512. We run 200 epochs in total. For each algorithm, we grid search the learning rate from $\{ 0.05, 0.01, 0.005, 0.001, 0.0005, 0.0001\}$ and $\epsilon$ from $\{ 1e-2, 1e-4, 1e-6 \}$ and select the values that get the best training result. For Adam-type algorithms, $\beta_1$ and $\beta_2$ are set to be the common choices that $\beta_1 = 0.9$ and $\beta_2 = 0.999$. For 1-bit Adam, similar to \cite{tang20211}, we run 13 epochs to compute the Adam-preconditioned vector $v_{T_w}$. For sketching methods, the sketch is set to have 100,000 columns and 10 rows. We set $k = 50,000$ and $P = 8$. For Efficient-Adam and 1-bit Adam with top-$k$ compressor, we choose $k = 750,000$. Therefore, all algorithms implemented in this task are communication-efficient and approximately achieve the same compression rate (about $32\times$ reduction).

Figure~\ref{resnet-cifar} shows the experimental results of this image classification task. According to the result of train loss value, we can see the three sketching methods converge faster than other algorithms on both CIFAR10 and CIFAR100 dataset. When comparing the train accuracy, the sketching methods are still advantageous over other methods. Our parameter averaging and gradient averaging SketchedAMSGrad and SketchedSGD approximately have the same performance. On CIFAR100, our parameter averaging SketchedAMSGrad is slightly better on the train accuracy results. According to the test accuracy results, our gradient averaging SketchedAMSGrad and SketchedSGD also outperform other algorithms on both dataset. On CIFAR100, our gradient averaging SketchedAMSGrad achieves the best performance on test accuracy. From this experiment we can see that although using compression on the returning message avoids the growing $O(n)$ communication cost issue of local top-$k$ (mentioned in \cite{sketched-sgd}), it probably encounters slow convergence since the estimator is too far away from the true top-$k$ coordinates. 

Theoretically, when the sketch size is larger, the probability of recovering top-$k$ coordinates is higher. The sketch size used in this experiment is 1,000,000. On CIFAR10, the test accuracy of our SketchedAMSGrad (GA) is 91.04\%. When we increase the sketch size to 2,000,000 and 3,000,000, the test accuracy is increased by 0.24\% and 0.39\% respectively. Thus, we can see the influence of sketch size. If the sketch size larger, our algorithm will probably show a better performance.
\vspace{-3pt}
\subsection{LeNet on MNIST}
\vspace{-3pt}
Our second task is to train MNIST dataset \cite{mnist} using LeNet-5 \cite{lenet}. This task is conducted in \cite{vrlsgd} under non-identical data partitioning. In this paper, we also run this experiment to verify the performance of our algorithms and related algorithms in the case of non-identical data distribution. MNIST is a database of hand written digits that is usually used for training image processing tasks. It contains 60,000 training images and 10,000 testing images from 10 classes. Each sample is a $28\times 28$ grayscale image. The training model used in this experiment is LeNet-5 which has about 60k parameters. We choose cross-entropy loss to be our criterion. The number of workers is set to be 50 and 100 respectively. Each worker can only access its local data and the data distribution is made non-identical. 

In this experiment we also compare our parameter averaging SketchedAMSGrad and gradient averaging SketchedAMSGrad with SketchedSGD, Efficient-Adam and 1-bit Adam. Both gradient quantization and gradient sparsification are considered as compressor in Efficient-Adam and 1-bit Adam. We also use compressor~(\ref{compressor-exp}) as quantization method and top-$k$ as sparsification method. We conduct two groups of experiments, with the number of workers $n = 50$ and $n = 100$ respectively. The total number of training epoch is 100. On each worker node, the batchsize is set to be 30. For each algorithm, we also grid search the learning rate from $\{ 0.05, 0.01, 0.005, 0.001, 0.0005, 0.0001\}$ and $\epsilon$ from $\{ 1e-2, 1e-4, 1e-6 \}$ and select the values that get the best training result. For Adam-type algorithms, we select $\beta_1 = 0.9$ and $\beta_2 = 0.999$ as usual. For 1-bit Adam, we also run 13 epochs to compute the Adam-preconditioned vector. For sketching methods, the sketch is set to have 400 columns and 5 rows. We set $k = 500$ and $P = 4$. For Efficient-Adam and 1-bit Adam with top-$k$ compressor, we choose $k = 2,000$. With these settings, the compression rate of different algorithms are approximately the same. The experimental results are shown in Figure~\ref{lenet}.

From the train loss results we can see that when data distribution is non-identical, generally gradient averaging algorithms SketchedSGD, SketchedAMSGrad (GA) and 1-bit Adam performs better than parameter averaging algorithms. Among these communication efficient adaptive gradient algorithms, our gradient averaging SketchedAMSGrad (GA) achieves the best convergence result in both $n = 50$ and $n = 100$ cases according to the train loss figures.
\vspace{-2pt}
\section{Conclusion} \label{conclusion}
In this paper, we propose a class of communication-efficient distributed adaptive gradient algorithm named SketchedAMSGrad based on two averaging strategies parameter averaging and gradient averaging to tackle the high communication cost issue for IoT edge-based training. Specifically, the communication cost of our algorithm at each iteration is reduced to $O(\log(d))$ from $O(d)$. 
Moreover, we proved that our algorithm achieves a fast convergence rate of $\tilde{O}(\frac{1}{\sqrt{nT}})$, which achieves the linear speedup with respect to the number of workers $n$, compared with single-machine AMSGrad. In particular, our analysis of gradient averaging SketchedAMSGrad can work for both identical and non-identical data distribution. To the best of our knowledge, our algorithm is the first to apply sketching technique to adaptive gradient methods.


\section*{Acknowledgment}

This work was partially supported by NSF IIS 1838627, 1837956, 1956002, 2211492, CNS 2213701, CCF 2217003, DBI 2225775.
\bibliographystyle{ieeetr}
\bibliography{icdm}


\end{document}